\documentclass{article}
\usepackage{iclr2027_conference,times}
\usepackage[T1]{fontenc}
\usepackage[utf8]{inputenc}
\usepackage{textcomp}

\usepackage{amsmath,amssymb,amsthm}
\usepackage{graphicx}
\usepackage{booktabs,array,longtable}
\usepackage{xcolor}
\usepackage{xspace}
\usepackage{url}

\graphicspath{{figures/}}
\newcolumntype{L}[1]{>{\raggedright\arraybackslash}p{#1}}
\newcolumntype{R}[1]{>{\raggedleft\arraybackslash}p{#1}}

\newcommand{\duorate}{\textsc{DuoRate}\xspace}
\newcommand{\bitsmoe}{\textsc{BitsMoE}\xspace}
\newcommand{\bpw}{\ensuremath{\text{bpw}}\xspace}
\newcommand{\Rstore}{\ensuremath{R_{\text{store}}}\xspace}
\newcommand{\Rread}{\ensuremath{R_{\text{read}}}\xspace}

\newcommand{\best}[1]{\textbf{#1}}
\newcommand{\na}{\textcolor{gray}{--}}          %
\newcommand{\yes}{\checkmark}
\newcommand{\no}{\ensuremath{\times}}

\newenvironment{tightenum}%
  {\begin{enumerate}\setlength{\itemsep}{1pt}\setlength{\parskip}{0pt}%
   \setlength{\topsep}{2pt}}%
  {\end{enumerate}}
  {\begin{itemize}\setlength{\itemsep}{1pt}\setlength{\parskip}{0pt}%
   \setlength{\topsep}{2pt}}%
  {\end{itemize}}

\theoremstyle{plain}
\newtheorem{proposition}{Proposition}
\newtheorem{corollary}{Corollary}
\theoremstyle{definition}

\usepackage[hidelinks]{hyperref}
\providecommand{\doi}[1]{\href{https://doi.org/#1}{doi:#1}}

\DeclareUnicodeCharacter{2212}{\ensuremath{-}}
\DeclareUnicodeCharacter{00D7}{\ensuremath{\times}}
\DeclareUnicodeCharacter{2248}{\ensuremath{\approx}}
\DeclareUnicodeCharacter{2264}{\ensuremath{\le}}
\DeclareUnicodeCharacter{2265}{\ensuremath{\ge}}
\DeclareUnicodeCharacter{00B1}{\ensuremath{\pm}}
\DeclareUnicodeCharacter{2192}{\ensuremath{\rightarrow}}
\hypersetup{pdftitle={DuoRate: Two-Rate Per-Expert Precision Allocation for Mixture-of-Experts Serving},pdfauthor={Mengqi Guo, Zhe Wei, Yuan Yuan, Jiunn Bin Lim, Boyi Pan, Michael Bi Mi}}

\iclrfinalcopy
\title{DuoRate: Two-Rate Per-Expert Precision\\ Allocation for Mixture-of-Experts Serving}
\author{{\spaceskip=0.8\fontdimen2\font Mengqi Guo, Zhe Wei, Yuan Yuan, Jiunn Bin Lim, Boyi Pan, Michael Bi Mi} \\
Huawei Technologies Ltd.}
\begin{document}
\maketitle
\lhead{Preprint}
\begin{abstract}
Serving a Mixture-of-Experts (MoE) model pays for expert precision twice: the
resident footprint pays for every expert the router can reach, while each batch-one
decode step pays only for the experts it selects.  These are two different rate functionals over the same per-expert widths,
yet published per-expert MoE allocators constrain only the first.  We call the
difference the \emph{store--read gap}; in the released \bitsmoe checkpoint,
$2.54$ bits per weight (\bpw) of storage become $7.50~\bpw$ of decode traffic.  We
propose \duorate, which stores once and reads per workload.  Offline, it stores one
uniform tier of a prefix-readable weight stream and bakes a distortion tensor.  Online, it profiles a workload's own
prefill without labels and solves a Lagrangian rate--distortion allocation under the
decode functional; routing frequency provably cancels from every per-expert
decision, and the solve costs one dot product per expert and a bisection.  A deployment
therefore keeps a single artifact that prefill reads at the stored rate and decode
reads at the workload's own rate.  On Qwen3-30B-A3B, \duorate reading $2.3~\bpw$
exceeds every 2-bit-class baseline by $6.3$ to $47.0$ points of average accuracy,
and at $2.9~\bpw$ it matches 4-bit formats that read $1.4$--$1.6~\bpw$ more.  The
advantage is largest under long decoding: on GSM8K, \duorate at $2.9~\bpw$
matches or exceeds every 4-bit format and trails BF16 by only $0.25$ points, and at
$2.6~\bpw$ it still matches or exceeds the 4.5-bit NF4 and FP4.  Against controls
in its own format, the allocation alone is worth $0.07$--$0.11~\bpw$ of read rate.
Code will be released upon publication.
\end{abstract}

\section{Introduction}
\label{sec:introduction}

A Mixture-of-Experts (MoE) language model evaluates only a few feed-forward experts
per token, but every expert the router can select must stay resident.  MoE
deployment is therefore memory-bound, and compressing expert weights is an active
area~\citep{mcmoe,bitsmoe}.  On Qwen3-30B-A3B~\citep{qwen3} the $6{,}144$ experts
occupy $54.0$~GiB in BF16, while a batch-one decode step touches only $8$ of the
$128$ experts in each of its $48$ layers.

This asymmetry gives weight precision two distinct costs.  Precision sets the
resident footprint, which prefill also pays once for every expert it reaches.  It
separately sets the traffic of each decode step, which at batch one is paid only for
the selected experts and dominates the cost of multiplying them~\citep{squeezellm}.
In a fixed-precision checkpoint one choice of widths fixes both costs.  A
\emph{prefix-readable} stream decouples them: one
refinement ladder is stored and a request reads only the prefix it needs, so the
read width can be set per expert and per workload without a second
checkpoint~\citep{anyprecision,matquant,stagq}.  We call the difference between what a
checkpoint stores and what a decode step reads the \emph{store--read gap}.

Existing per-expert MoE allocators constrain only the first cost, whether the budget
is per block~\citep{mcmoe}, per layer~\citep{mxmoe,bitsmoe} or global~\citep{gemq},
and routing frequency, where it appears at all, enters the objective, never the constraint.  Serving
systems adapt expert precision to decode-time conditions~\citep{edgemoe,hobbit} but
do not write decode traffic as a rate to be constrained.  The released \bitsmoe 2-bit
checkpoint shows what this omission costs.  Its expert segment stores
$2.5425~\bpw$, reproducing that paper's own $8.58$~GiB, yet a batch-one decode step
reads $7.5008~\bpw$ from the same weights: the FP16 basis shared by the $128$ experts
of a layer is amortized over all of them in storage but over only the $8$ selected at
decode, so its read rate is $2.95\times$ its stored rate.  The field-by-field audit behind
these figures, together with the full evaluation protocol and further analyses, is
provided in the supplementary material.

\begin{figure*}[t]
    \centering
    \includegraphics[width=1.00\linewidth]{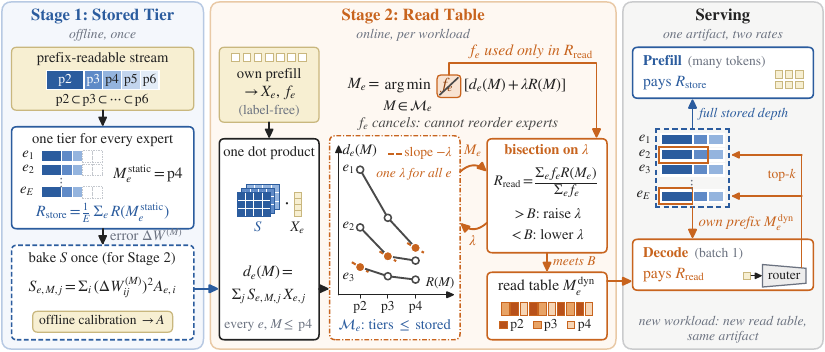}
    \caption{\textbf{The \duorate pipeline.}  \textbf{Stage~1} (offline, once) stores
    one uniform tier of a prefix-readable stream and bakes the distortion tensor $S$.
    \textbf{Stage~2} (per workload) profiles the workload's own prefill without labels,
    scores every expert and tier by one dot product, and takes a per-expert Lagrangian
    $\arg\min$ with one shared $\lambda$, from which $f_e$ cancels; a bisection on
    $\lambda$ meets the read budget $B$.  \textbf{Serving}: prefill pays \Rstore, decode
    pays \Rread.}
    \label{fig:framework}
\end{figure*}

Our starting point is that the two costs call for two decisions made at two
different times: what to \emph{store}, once for every workload, and what to
\emph{read}, separately for each.  Writing the decode cost down as a rate functional
shows that the second decision is cheap.  Routing frequency multiplies an expert's
distortion and its read rate alike, so it cancels from the per-expert
rate--distortion trade-off and cannot reorder experts at any budget.  The algebra is
a known cancellation for mixture sources~\citep{prismquant}; what is specific to MoE
is that the mixture weight is the routing frequency, precisely the statistic prior
allocators treat as importance.  A decode-side allocation therefore needs only the
workload's activation statistics; beyond setting the budget, frequency matters only
through the storage term and, once calibration domains are pooled, through each expert's mixture of
domains.

\duorate turns this into a two-stage pipeline (Figure~\ref{fig:framework}).  In
\textbf{Stage~1}, run once offline, it stores one uniform tier of the stream and
bakes a distortion tensor $S$ from the quantization error of every tier.  In
\textbf{Stage~2}, run per workload, it profiles the workload's own prefill without
labels, scores every expert and tier with one dot product against $S$, and selects
one tier per expert by a Lagrangian $\arg\min$ capped by the stored tier, with a
bisection on the multiplier meeting the read budget.  At \textbf{serving} time a
single resident artifact serves both phases: prefill reads the full stored depth and
pays the stored rate, while decode reads each selected expert's own prefix and pays
the read rate.  A new workload needs a new read table, computed in seconds, but not a
new checkpoint.

On Qwen3-30B-A3B, \duorate reading $2.3~\bpw$ exceeds every 2-bit-class baseline by
$6.3$ to $47.0$ points of average accuracy, and reading $2.9~\bpw$ it matches 4-bit
formats that read $1.4$--$1.6~\bpw$ more.  Its advantage is largest under long
decoding, where the read rate is paid again on every generated token: on GSM8K it
matches or exceeds every 4-bit format at $2.9~\bpw$ and stays within $0.25$ points of
BF16.  Against controls that share its format, the allocation itself is worth
$0.07$--$0.11~\bpw$ of read rate, and every rate-matched comparison with them excludes zero.

\paragraph{Contributions.}
\begin{tightenum}
    \item We formalize MoE weight precision as two rate functionals, a resident rate
    and a routing-weighted decode rate, and show on a released checkpoint that
    accounting for the first alone hides a $2.95\times$ store--read gap.

    \item We prove that routing frequency cancels from decode-side allocation within
    a domain and survives through exactly two channels, so a per-workload read table
    depends on frequency only through its budget.

    \item We propose \duorate, a store-once, read-per-workload allocator whose read
    tables select byte prefixes of one stored tier, so a deployment keeps a single
    artifact with two rates, and whose online stage is label-free: one dot product
    per expert and a bisection.

    \item Under one harness, \duorate outperforms 2-bit-class baselines at
    $2.3~\bpw$, matches 4-bit formats at $2.9~\bpw$, and matches or exceeds them on long
    decoding, while paired same-format controls isolate what the allocation, as
    opposed to the format, contributes.
\end{tightenum}

\section{Related Work}
\label{sec:related}

\paragraph{Quantizing MoE experts.}
General-purpose post-training quantizers such as GPTQ~\citep{gptq} and
HQQ~\citep{hqq} apply one bit-width to every expert; MoE-specific variants add
low-rank compensators (MiLo;~\citealp{milo}) or expert-balanced calibration
(MoEQuant;~\citealp{moequant}).  Mixed-precision methods instead assign different
widths to different experts, typically by an integer program over a distortion
surrogate under one bit budget.  MC-MoE~\citep{mcmoe} and MC\#~\citep{mcsharp}
weight an output-domain error by a frequency-and-gate prior; MxMoE~\citep{mxmoe}
allocates per layer under a memory constraint; GEMQ~\citep{gemq} allocates globally
over the experts of the model we study; \bitsmoe~\citep{bitsmoe} allocates over the
spectral components of a shared SVD basis; MODE~\citep{mode} splits routing
frequency by modality; and MoPEQ~\citep{mopeq} orders experts by a Hessian trace.
Read against their published objectives, none of them constrains a routing-weighted
rate: those that hold a budget hold the unweighted functional, frequency enters, if
at all, the objective rather than the constraint, and each reports a single
bits-per-weight figure; none distinguishes the width a model stores from the width a
decode step reads.

\paragraph{Phase asymmetry in MoE serving.}
MoE serving work establishes that prefill loads each expert once and amortizes it
over the tokens routed to it, whereas decode touches a few experts per
step~\citep{duoserve,moeprefill}; two refinements are known: chunked prefill
reloads experts per chunk~\citep{layeredprefill}, and a batched decode step loads
every distinct expert its batch activates~\citep{eldr}.  Phase-dependent precision
has been proposed in both directions~\citep{pmpd,mixquant}, and expert-offloading
systems set per-expert precision against decode traffic~\citep{edgemoe,hobbit}, in
each case by a sensitivity heuristic or a cache policy rather than under a rate
constraint.  The asymmetry is thus well known on the serving side, while
per-expert quantization inherits a single-rate model from the compression
literature; the two have not been written into one objective.

\paragraph{Rate--distortion allocation and multi-precision formats.}
Lagrangian bisection for rate--distortion allocation~\citep{shohamgersho} has been
applied to dense LLM weights~\citep{radio}, the KV cache~\citep{ratequant} and, with
Hessian curvature, per layer~\citep{hawqv2}.  PrismQuant~\citep{prismquant} proves
that a Gaussian-mixture component's weight multiplies both its rate and its
distortion and cancels at stationarity, and \citet{effros1999} bounds what a
fixed-rate multiresolution code loses against a workload-conditioned one.  The
capability our allocator relies on, reading a prefix of a stored code, comes from
multi-precision formats~\citep{anyprecision,matquant,anybcq,rrq,stagq}, and such codes have
been built into MoE serving~\citep{d2moe,slicemoe} without a rate--distortion objective.

\section{Method}
\label{sec:method}

\paragraph{The weight stream.}
\label{sec:method-stream}
\duorate assigns one precision per expert on StagQ~\citep{stagq}, a prefix-readable
weight stream: each weight has a two-bit group-wise affine base code followed by up to
four one-bit refinement planes on a dyadic step schedule, stored in plane order over
groups of $32$ columns.  Every tier is therefore a byte prefix of the stream, decoded by
one affine map from metadata shared by all tiers, and reading through depth $M$ costs
$M$ payload bits.  Counting every stored field, including the per-group metadata and
a sparse side record that holds out the few weights the grid serves worst, the rate
of tier p$M$ is
\begin{equation}
  R(M)=R(2)+(M-2),\qquad R(2)=2.2704~\bpw,
  \label{eq:instance-rates}
\end{equation}
so tiers p2 through p6 cost $2.2704,\dots,6.2704~\bpw$.  The allocator relies on only
three properties of the format: the tiers form a finite set of known rates, a lower
tier is a byte prefix of a higher one, and changing an expert's tier requires no
re-encoding.

\subsection{Two rate functionals, and where routing frequency survives}
\label{sec:method-rates}

Let expert $e$ sit at precision $M_e$.  Prefill processes $T$ tokens together and
fetches an expert's weights if \emph{any} of them routes to it, reusing the weights
across all of them~\citep{duoserve,moeprefill}; with top-$k$ routing a prompt of a few
hundred tokens reaches nearly every expert, so prefill traffic is the same
unweighted sum as the resident footprint.  Batch-one decode is the opposite limit
$T{=}1$: nothing amortizes the fetch, and the expected traffic is weighted by how
often the router selects each expert.  Writing $f_e$ for that decode routing count,
\begin{equation}
  \Rstore=\frac{1}{E}\sum_{e=1}^{E}R(M_e),
  \qquad
  \Rread=\frac{\sum_{e}f_{e}\,R(M_e)}{\sum_{e}f_{e}} .
  \label{eq:two-rates}
\end{equation}
Under a uniform precision with no shared components the two coincide.  A basis
shared across the experts of a layer makes \Rread{} exceed \Rstore, as for \bitsmoe; a
prefix-readable stream allows $\Rread<\Rstore$ (Figure~\ref{fig:main}).  At
batch size $b>1$, $f_e$ is replaced by $1-(1-p_e)^{b}$, with $p_e$ the per-token
probability of selecting $e$, so \Rread{} returns to the unweighted form as the batch grows.
The two functionals differ not only in value but in which statistics they admit
into the assignment.

\begin{proposition}[Frequency cancellation]
\label{prop:cancellation}
Let $\mathcal M$ be a finite precision set with rate function
$R:\mathcal M\to\mathbb R$, let $d_e:\mathcal M\to\mathbb R_{\ge0}$ be per-expert
distortion curves and let $f_e>0$.  Consider the allocation problem in which the
same weights appear in the objective and in the constraint,
\[
  \min_{M_1,\dots,M_E}\ \sum\nolimits_e f_e\,d_e(M_e)
  \quad\text{s.t.}\quad
  \sum\nolimits_e f_e R(M_e)\ \le\ B\sum\nolimits_e f_e .
\]
For every $\lambda\ge0$ the per-expert minimizer of the Lagrangian relaxation is
$M_e(\lambda)=\arg\min_{M\in\mathcal M}\{d_e(M)+\lambda R(M)\}$, which does not
depend on $f$.  The family swept by the multiplier is identical for every positive
weight vector, and $f$ only selects which member of it meets the budget.
\end{proposition}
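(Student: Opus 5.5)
The plan is to write the Lagrangian explicitly and show that the frequency weights factor out of each expert's term. For $\lambda\ge0$, define
\[
  \mathcal L_f(M_1,\dots,M_E;\lambda)=\sum\nolimits_e f_e\,d_e(M_e)+\lambda\Bigl(\sum\nolimits_e f_e R(M_e)-B\sum\nolimits_e f_e\Bigr)
  =\sum\nolimits_e f_e\bigl[d_e(M_e)+\lambda R(M_e)\bigr]-\lambda B\sum\nolimits_e f_e .
\]
The last term does not depend on the decision variables. The remaining sum is separable: expert $e$'s term depends only on $M_e$. Because the domain $\mathcal M^E$ is a product set, the minimum of $\mathcal L_f$ is attained by minimizing each summand independently. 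Since $f_e>0$, multiplying by $f_e$ is a strictly increasing map. Hence $\arg\min_{M\in\mathcal M} f_e[d_e(M)+\lambda R(M)]=\arg\min_{M\in\mathcal M}\{d_e(M)+\lambda R(M)\}$. This holds as sets, including the case of ties. The minimum exists because $\mathcal M$ is finite. This proves the first claim, $M_e(\lambda)$ independent of $f$.

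Next I would make the ``family'' statement precise. Fix a deterministic tie-breaking rule that depends only on $(d_e,R,\lambda)$, for example the smallest rate among the minimizers. The swept family is then $\mathcal F=\{(M_1(\lambda),\dots,M_E(\lambda)):\lambda\ge0\}$, and its definition never mentions $f$. If ties are kept as sets, the family of per-$\lambda$ minimizer sets is likewise $f$-free. So $\mathcal F$ is the same for every positive weight vector.

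For the last claim, I would show that $\lambda\mapsto R(M_e(\lambda))$ is nonincreasing. This is the standard exchange argument. Take $\lambda<\lambda'$ with $M=M_e(\lambda)$ and $M'=M_e(\lambda')$, and add the two optimality inequalities $d_e(M)+\lambda R(M)\le d_e(M')+\lambda R(M')$ and $d_e(M')+\lambda' R(M')\le d_e(M)+\lambda' R(M)$. This gives $(\lambda'-\lambda)(R(M')-R(M))\le0$. Consequently the weighted rate $\rho_f(\lambda)=\sum_e f_eR(M_e(\lambda))/\sum_e f_e$ is a nonincreasing step function along the common family. The only role of $f$ is to define this scalar readout. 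The bisection, or more precisely the choice $\lambda^\star_f=\inf\{\lambda:\rho_f(\lambda)\le B\}$, then selects a member of $\mathcal F$. Different $f$ can select different members, but always from the same ordered path. This is exactly the sense in which $f$ ``only selects which member meets the budget.''

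The main obstacle is interpretive rather than technical. The statement concerns the Lagrangian relaxation, not the integer program itself. I would be careful not to claim that every constrained optimum lies in $\mathcal F$, because with a finite $\mathcal M$ there are duality gaps. I would state only what the argument gives: each member of $\mathcal F$ is optimal for the constrained problem at its own budget $\rho_f(\lambda)$. This follows from the usual Everett argument. If a feasible allocation had lower weighted distortion at no greater weighted rate, it would also have a smaller value of $\mathcal L_f$, which contradicts the minimization. I would also remark on ties at breakpoints, which is where the tie-breaking convention matters.
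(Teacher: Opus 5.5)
Your proof is correct and follows the paper's own argument: write the Lagrangian, separate it across experts, and observe that the common positive factor $f_e$ does not move each summand's $\arg\min$, so the family swept by $\lambda$ is free of $f$ and $f$ enters only through the budget condition. Your additional steps are sound refinements the paper's proof does not spell out, though they do not change the route: the fixed tie-breaking rule, the exchange argument showing that $R(M_e(\lambda))$ is nonincreasing in $\lambda$, and the Everett optimality remark with its caution about duality gaps.
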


\begin{corollary}
\label{cor:no-reorder}
Routing frequency cannot reorder two experts: whether $e$ receives a deeper prefix
than $e'$ is decided at every $\lambda$ by their distortion curves alone, through the
exchange rate $\Delta d/\Delta R$ of each step.
\end{corollary}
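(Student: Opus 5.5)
The plan is to read the corollary off Proposition~\ref{prop:cancellation}. The proposition says that at every $\lambda\ge0$ the Lagrangian relaxation separates across experts. Its minimizer $M_e(\lambda)=\arg\min_{M\in\mathcal M}\{d_e(M)+\lambda R(M)\}$ depends only on $d_e$, $R$ and $\lambda$. The comparison ``$e$ receives a deeper prefix than $e'$ at multiplier $\lambda$'' is the statement $M_e(\lambda)>M_{e'}(\lambda)$. This is a relation between two $f$-independent quantities, so it is itself $f$-independent.

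First, I will write the relaxed objective as $\sum_e f_e\,[d_e(M_e)+\lambda R(M_e)]-\lambda B\sum_e f_e$. Since each $f_e>0$ multiplies only its own bracket, minimizing the sum is equivalent to minimizing each bracket separately. This recovers the proposition's minimizer, and a positive factor cannot change an $\arg\min$.

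Second, I will fix a deterministic tie-breaking rule, say the shallowest minimizer. This makes $M_e(\lambda)$ a well-defined function, and the rule uses no reference to $f$.

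Third, to make the ``exchange rate'' clause precise, I will order $\mathcal M$ by rate, $R(M^{(1)})<\dots<R(M^{(K)})$. The tiers here are strictly increasing in $R$ by \eqref{eq:instance-rates}. For consecutive tiers, $M^{(j+1)}$ beats $M^{(j)}$ in the Lagrangian exactly when
\[
\frac{d_e(M^{(j)})-d_e(M^{(j+1)})}{R(M^{(j+1)})-R(M^{(j)})}>\lambda .
\]
The depth selected for $e$ at $\lambda$ is therefore determined by comparing $\lambda$ with the slopes $\Delta d/\Delta R$ of $e$'s curve. For the general non-consecutive case, I will use slopes of the lower convex hull of the points $(R(M),d_e(M))$. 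The minimizer moves along the hull vertices as $\lambda$ decreases, so the depth is a monotone function of $\lambda$ determined by the hull slopes of $d_e$ alone. Comparing $e$ with $e'$ then reduces to comparing these two slope sequences against the same $\lambda$, and $f$ appears in neither.

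Finally, I will close with the observation that $f$ enters only through which $\lambda^\star$ meets the budget. Whatever $\lambda^\star$ is, the ordering of $e$ and $e'$ at $\lambda^\star$ is the $f$-free relation above. So changing $f$ can move the operating point along the family, but it cannot flip a pairwise order at any fixed $\lambda$.

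The main obstacle is the wording ``decided through the exchange rate of each step'' when $d_e$ is not convex in $R$. Some tiers are then never selected, and consecutive-step slopes alone do not give the answer. I would handle this by stating the result with hull slopes and noting that they coincide with the step slopes when $d_e$ is convex. A secondary subtlety is ties in the $\arg\min$, which the fixed tie-breaking rule resolves.
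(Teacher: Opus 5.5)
Your proposal is correct and takes essentially the same route as the paper's proof in the appendix. The paper also orders $\mathcal M$ by rate and identifies the promotion threshold $\lambda_e(M\to M')=[d_e(M)-d_e(M')]/[R(M')-R(M)]$ as the exchange rate of each step, in which no $f_e$ appears, and it handles non-convex curves by skipping tiers above the lower convex hull and taking thresholds between consecutive hull vertices. Your explicit shallowest-minimizer tie-break and your remark that $f$ only moves the operating point $\lambda^\star$ are minor clarifications, consistent with the paper's tie rule and with Proposition~\ref{prop:cancellation}.
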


\begin{figure}[t]
\centering
\includegraphics[width=\linewidth]{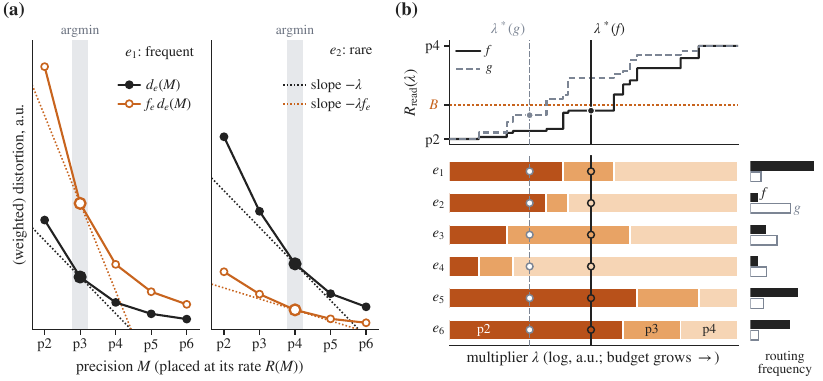}
\caption{Frequency cancellation (Proposition~\ref{prop:cancellation},
Corollary~\ref{cor:no-reorder}); illustrative values.  \textbf{(a)}~For two experts
eightfold apart in $f$, supporting lines of slope $-\lambda$ to $d_e$ and $-\lambda f_e$
to $f_e d_e$ touch the same tier, so the $\arg\min$ does not depend on $f$.
\textbf{(b)}~On the read set $\{\mathrm{p2},\mathrm{p3},\mathrm{p4}\}$ the thresholds fix
one family of tier maps; routing frequencies $f$ and $g$ only change where the bisection stops.}
\label{fig:cancellation}
\end{figure}

\emph{Proof of Proposition~\ref{prop:cancellation}.}  The Lagrangian
$\sum_e f_e[d_e(M_e)+\lambda R(M_e)]-\lambda B\sum_e f_e$ separates across experts, and
in the $e$-th summand both terms carry the same positive factor $f_e$, which does not
move the $\arg\min$.  The budget condition does depend on $f$, so $f$ selects the
member of the family that meets it.  \hfill$\square$

Figure~\ref{fig:cancellation} illustrates the result.  The algebra is not new:
\citet{prismquant} prove the same cancellation for a Gaussian-mixture source.  What is
specific to MoE is that the mixture weight \emph{is} the routing frequency, the very
quantity that prior allocators treat as importance and that, in the decode functional,
carries no information about the assignment.  Frequency can influence the assignment
only where it does not scale an expert's distortion and rate alike, which happens in
exactly two ways.

\paragraph{The cross-domain exception.}
Pooling calibration domains breaks the cancellation in one specific way.  Let domain
$d$ contribute distortion $d^{(d)}_e$ and frequency $f_{d,e}$, each normalized within
the domain, and let the pooled objective weight domains by $w_d$.  With
$F_e=\sum_d w_d f_{d,e}$,
\begin{equation}
  \sum\nolimits_d w_d f_{d,e}\,d^{(d)}_e(M)+\lambda F_e R(M)
  = F_e\Bigl[\sum\nolimits_d \pi_{d\mid e}\,d^{(d)}_e(M)+\lambda R(M)\Bigr],
  \qquad
  \pi_{d\mid e}=\frac{w_d f_{d,e}}{F_e}.
  \label{eq:multidomain}
\end{equation}
The total frequency $F_e$ cancels as before, but its \emph{distribution} over
domains does not, because it weights the curves being averaged.

\begin{proposition}[The two channels]
\label{prop:channels}
Routing frequency enters a per-expert assignment in exactly two ways: through a
bit-width term whose weight is not proportional to the distortion weight, which under
an \Rstore{} budget with multiplier $\lambda_{\mathrm s}$ gives the marginal cost $\lambda_{\mathrm s}/(Ef_e)$ and makes
a frequently routed expert cheaper to deepen; and, once calibration domains are
pooled, through the per-expert mixture $\pi_{d\mid e}$ of
Equation~\eqref{eq:multidomain}.  In no other role does it survive.
\end{proposition}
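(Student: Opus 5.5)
I would prove Proposition~\ref{prop:channels} by writing the most general per-expert Lagrangian that a frequency-aware allocator can form, and then identifying which frequency factors survive in the per-expert $\arg\min$. Concretely, I take an objective $\sum_e a_e\,\bar d_e(M_e)$ and any collection of linear rate constraints $\sum_e b^{(j)}_e R(M_e)\le B_j$. Here $a_e$ is the weight the objective places on expert $e$, $\bar d_e$ is its (possibly domain-averaged) distortion curve, and $b^{(j)}_e$ are the constraint weights. This covers both functionals in Equation~\eqref{eq:two-rates}: $b_e=1/E$ for \Rstore{} and $b_e=f_e/\sum f$ for \Rread. With multipliers $\lambda_j$, the Lagrangian separates across experts exactly as in the proof of Proposition~\ref{prop:cancellation}. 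The $e$-th summand, divided by the positive $a_e$ (which does not move the $\arg\min$), gives the selection rule
\[
M_e=\arg\min_{M\in\mathcal M}\Bigl\{\bar d_e(M)+\sum\nolimits_j \lambda_j\,\tfrac{b^{(j)}_e}{a_e}\,R(M)\Bigr\}.
\]
Frequency can therefore reach $M_e$ only through the ratios $b^{(j)}_e/a_e$ or through the shape of $\bar d_e$. This is the exhaustive enumeration the word ``exactly'' requires.

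I then treat the two slots in turn.

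For the ratio slot, when $a_e\propto f_e$ and $b_e\propto f_e$ the ratio is constant in $e$, and Proposition~\ref{prop:cancellation} applies. When the ratio is not constant, frequency survives. The canonical instance is the \Rstore{} budget with $b_e=1/E$ and $a_e=f_e$, giving $\lambda_{\mathrm s}/(Ef_e)$. The marginal cost of deepening a step is $\lambda_{\mathrm s}\Delta R/(Ef_e)$, which decreases in $f_e$. By the exchange-rate argument of Corollary~\ref{cor:no-reorder}, the deepening threshold on $\Delta d/\Delta R$ falls as $f_e$ grows, so a frequently routed expert is cheaper to deepen.

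For the curve slot, within one domain $\bar d_e=d_e$ carries no frequency. Once domains are pooled, I invoke Equation~\eqref{eq:multidomain}: the total $F_e$ factors out like $a_e$, leaving $\bar d_e=\sum_d\pi_{d\mid e}d^{(d)}_e$. The mixture $\pi_{d\mid e}$ is frequency-dependent and not removable, because it reshapes the curve rather than scaling it. A two-expert, two-domain example with crossing curves shows that it can genuinely change the $\arg\min$.

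The main obstacle is making ``in no other role does it survive'' precise, since that claim is only as strong as the class of allocators admitted. I would state it relative to the separable class above: additive objectives and additive rate constraints over a finite tier set, whose only frequency inputs are the weights $a_e$, $b^{(j)}_e$ and the domain weights inside $\bar d_e$. Within that class the claim follows directly from the displayed selection rule. Finally, I would remark that the budget levels $B_j$ also depend on $f$, but only through which $\lambda$ is selected, as in Proposition~\ref{prop:cancellation}, not through the per-expert rule.
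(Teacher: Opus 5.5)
Your proposal is correct and follows essentially the paper's own argument. The paper likewise divides the separable summand $f_e d_e(M)+\lambda_{\mathrm s}w_e R(M)+\lambda_{\mathrm b}\tilde f_e R(M)$ by $f_e$, so frequency survives only in the ratios $w_e/f_e$ and $\tilde f_e/f_e$ (giving $\lambda_{\mathrm s}/(Ef_e)$ when $w_e\equiv 1/E$), and it divides Equation~\eqref{eq:multidomain} by $F_e$ to leave the mixture $\pi_{d\mid e}$; your general weights $b^{(j)}_e$ reduce to the paper's two-constraint case $j\in\{\mathrm s,\mathrm b\}$, and your explicit scoping of ``exactly'' to the separable class states precisely what the paper leaves implicit.
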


This is the structural reason for two stages.  A table read by a single workload is
solved on that workload's statistics alone, where frequency has nothing left to do
but set the budget; a table shared by all workloads, which is what is stored, is the
only place where pooling, and with it the mixture channel, can enter.

\subsection{\duorate: store once, read per workload}
\label{sec:method-allocator}

\duorate assigns one precision per expert, shared by its three matrices, giving
$E=6{,}144$ allocation units; splitting the matrices into separate units did not
measurably help in our earlier experiments.  The pipeline has an offline stage, an
online stage and a serving path (Figure~\ref{fig:framework}).

\paragraph{Stage 1: store one tier and bake $S$ (offline, once).}
Stage one fixes what the deployment keeps resident and is constrained on \Rstore{}
alone.  We store the uniform p4 tier, $M^{\mathrm{static}}_e=\mathrm{p4}$ for every
expert and $\Rstore=4.2704~\bpw$, so every read tier is a byte prefix of one resident
artifact and prefill never runs below four bits.  The same stage prepares everything
the online stage needs.  The distortion of expert $e$ at precision $M$ is its squared
weight perturbation, weighted on the input side by the energy of the activations it
sees and on the output side by how much each output channel matters to the model's
decision:
\begin{equation}
  d_e(M)=\sum\nolimits_{ij}\bigl(\Delta W^{(M)}_{ij}\bigr)^2 A_{e,i}\,X_{e,j},
  \qquad X_{e,j}=\mathbb E_t\bigl[x_{t,j}^2\bigr],
  \label{eq:distortion}
\end{equation}
where $\Delta W^{(M)}$ is the error of the stored byte prefix at depth $M$, and
$A_{e,i}\equiv1$ recovers the activation-weighted squared error that prior per-expert
allocators optimize.  Since neither $\Delta W^{(M)}$ nor $A$ depends on the served workload, the
workload-independent part is baked once,
\begin{equation}
  S_{e,M,j}=\sum\nolimits_i\bigl(\Delta W^{(M)}_{ij}\bigr)^2 A_{e,i}
  \quad\Longrightarrow\quad
  d_e(M)=\sum\nolimits_j S_{e,M,j}\,X_{e,j}.
  \label{eq:factorization}
\end{equation}
The output weight $A$ is decision-aligned rather than constant: quantized models that
match a reference on average still flip individual answers where the top-two margin
is small~\citep{flips}, so we weight each output channel by the sensitivity of the
expected number of flipped decisions, obtained from one offline backward pass and
pooled over four reasoning and multiple-choice domains.  An allocated stored table, solved on pooled
prefill statistics, is equally admissible here and is where the mixture channel of
Equation~\eqref{eq:multidomain} would act; we leave it to
future work.

\paragraph{Stage 2: a label-free read table per workload (online).}
For a served workload, \duorate runs the model over the workload's own prefill once,
without labels, and records the per-channel second moments $X_e$ and routing counts
$f_e$, the prefill estimate of the decode counts in Equation~\eqref{eq:two-rates}.  Equation~\eqref{eq:factorization} then gives the full distortion table as one
dot product per expert and tier, and the read table is the per-expert minimizer
\begin{equation}
  M^{\mathrm{dyn}}_e=\arg\min_{M\in\mathcal M_e}\ \bigl[\,d_e(M)+\lambda\,R(M)\,\bigr],
  \qquad \mathcal M_e=\{M: M\le M^{\mathrm{static}}_e\}=\{\mathrm{p2},\mathrm{p3},\mathrm{p4}\},
  \label{eq:argmin}
\end{equation}
with a single multiplier $\lambda$ shared by all experts.  By
Proposition~\ref{prop:cancellation}, $f_e$ does not enter this $\arg\min$; its only
role is to convert $\lambda$ into a read rate, and a bisection raises or
lowers $\lambda$ until \Rread{} meets the budget $B$.  Ties break toward the lower rate, so an
expert the prompts never reach falls to the lowest tier and costs nothing in the read
functional.  No integer program is needed: wherever the per-expert tier family is
convex, the Lagrangian sweep is globally optimal at every rate it reaches.  The cap imposed by the
stored tier rarely binds: without it, only $12$ to $29$ of the $6{,}144$ experts would
rise above p4 on the workloads we checked.

\paragraph{Serving: one artifact, two rates.}
At serving time the resident artifact is read at two depths.  Prefill processes many
tokens, reaches nearly every expert, and reads the full stored depth, paying
\Rstore.  Each decode step routes its token to $k$ experts and reads only their
prefixes $M^{\mathrm{dyn}}_e$, paying \Rread.  Changing workloads replaces the read
table, a few kilobytes of tier indices, and leaves the stored weights untouched.
Because Equation~\eqref{eq:factorization} needs only $X$ from the live stream, the read
table can also be recalibrated online, with no labels and no reference model, when the
served domain shifts.

\begin{table*}[t]
    \centering
    \small
    \caption{Results on Qwen3-30B-A3B-Base.  \textbf{BPW} is the rate read per
    activated expert weight at batch-one decode (for \duorate, the read budget);
    \duorate stores $4.3$.
    $^{*}$\bitsmoe stores $2.5$; $7.5$ is its read rate.  GSM8K is strict match;
    \textsc{Avg.} is the mean of MMLU-Pro, MMLU and GSM8K.  Bold is the best in each
    group.}
    \label{tab:main}
    \setlength{\tabcolsep}{7pt}
    \begin{tabular}{l|c|c|ccc|c}
        \toprule
        \textbf{Method} & \textbf{BPW} & \textbf{PPL}$\downarrow$
        & \multicolumn{3}{c|}{\textbf{Accuracy}$\uparrow$ (\%)} & \textbf{Avg.}$\uparrow$ \\
        \cmidrule(lr){4-6}
        & Read & WikiText-2 & MMLU-Pro & MMLU & GSM8K & \\
        \midrule[1pt]
        BF16      & 16.0 & 8.27 & 56.86 & 83.30 & 91.25 & 77.14 \\
        \midrule[1pt]
        NVFP4     & 4.5 & 8.44 & 56.79 & \best{83.09} & 90.25 & 76.71 \\
        NF4       & 4.5 & 8.41 & 56.79 & 82.11 & 89.50 & 76.13 \\
        FP4       & 4.5 & 8.62 & 56.43 & 81.75 & 88.25 & 75.48 \\
        MXFP4     & 4.3 & 8.58 & 56.71 & 82.53 & 90.25 & 76.50 \\
        HIF4      & 4.5 & \best{8.40} & \best{56.86} & 82.60 & \best{91.00} & \best{76.82} \\
        \midrule[1pt]
        HQQ       & 2.3 & 21.84 & 37.64 & 53.05 & 23.25 & 37.98 \\
        GPTQ      & 2.3 & 14.08 & 30.93 & 43.93 & 5.00  & 26.62 \\
        MiLo      & 2.3 & 18.93 & 35.57 & 48.21 & 21.00 & 34.93 \\
        MoEQuant  & 2.5 & 13.81 & 44.93 & 75.37 & 52.75 & 57.68 \\
        \bitsmoe  & 7.5$^{*}$ & 14.69 & 46.21 & 77.75 & 78.00 & 67.32 \\
        \cmidrule(lr){1-7}
        \duorate  & 2.9 & \best{9.07} & \best{56.00} & \best{81.47} & \best{91.00} & \best{76.16} \\
        \duorate  & 2.6 & 9.46  & 53.86 & 80.56 & 89.50 & 74.64 \\
        \duorate  & 2.3 & 9.97  & 52.64 & 80.00 & 88.25 & 73.63 \\
        \bottomrule
    \end{tabular}
\end{table*}

\section{Experiments}
\label{sec:experiments}

\paragraph{Setup.}
\label{sec:exp-setup}
We quantize Qwen3-30B-A3B-Base~\citep{qwen3}, whose $48$ MoE layers of $128$ experts
give $E=6{,}144$ allocation units.  Every method quantizes only the expert matrices;
attention, routers, embeddings and normalization stay in BF16.  All numbers come from
one harness, lm-eval~\citep{lmeval}, at sequence length $2{,}048$ and batch size one:
word perplexity on the full WikiText-2~\citep{wikitext} test split ($62$ documents),
and accuracy on a stratified MMLU-Pro sample ($100$ questions for each of $14$
subjects, 5-shot), MMLU~\citep{mmlu} ($25$ per subject, 5-shot) and
GSM8K~\citep{gsm8k} (5-shot, strict match).  GSM8K is the one benchmark that decodes
long chains of reasoning; the other three are scored by log-likelihood.  \duorate
stores the p4 tier ($4.2704~\bpw$) and solves one read table over $\{$p2, p3, p4$\}$
per benchmark from that benchmark's own prefill, at read budgets of $2.3$, $2.6$ and
$2.9~\bpw$; calibration items are disjoint from the scored ones on every benchmark but
WikiText-2, which Section~\ref{sec:exp-where} treats separately.  Each budget is met to within $0.001~\bpw$ by a CPU bisection that takes
seconds.

\paragraph{Baselines.}
We compare against five 4-bit formats applied to the same matrices --- NVFP4 (E4M3
block-16 scales plus a per-tensor FP32 scale), NF4 and FP4 as in bitsandbytes (block
$64$), MXFP4 (E2M1 with an E8M0 scale per $32$) and HIF4 (HiFloat4;~\citealp{hif4}), a
4-bit block floating-point format with $64$-element groups and a three-level scale, at
$4.5~\bpw$ --- and against four 2-bit post-training methods,
HQQ~\citep{hqq}, GPTQ~\citep{gptq}, MiLo~\citep{milo} and MoEQuant~\citep{moequant}.
For \bitsmoe~\citep{bitsmoe} we dequantize the released 2-bit checkpoint and score it
with no second quantization pass, after recomputing its stored and read rates from
the released tensors and checking that our harness reproduces its published numbers
where the two protocols coincide.  To isolate the allocation from the format, we
also compare against controls that share \duorate's format and differ only in how
tiers are assigned: a random mixture of two adjacent tiers matched to the same read
rate, and an interpolated null between the uniform tiers.  These comparisons are
paired, using an exact document sign test with a bootstrap interval on the aggregate
perplexity.

\subsection{Main Results}
\label{sec:exp-main}

\begin{figure*}[t]
    \centering
    \includegraphics[width=1.00\linewidth]{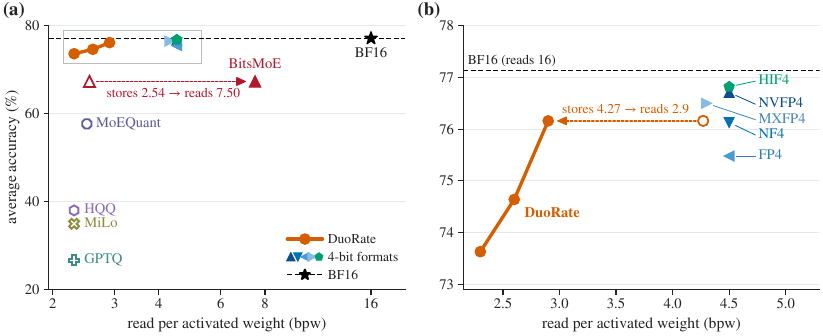}
    \caption{Average accuracy (MMLU-Pro, MMLU, GSM8K) against the rate read per
    activated weight at batch-one decode.  \textbf{(a)}~Every row of
    Table~\ref{tab:main}.  \textbf{(b)}~The top-left corner: \duorate at $2.9~\bpw$ sits
    with the 4-bit formats, which read $1.4$--$1.6~\bpw$ more.  Hollow markers are stored
    rates joined to read rates: \bitsmoe stores $2.54$ and reads $7.50$; \duorate stores
    $4.27$ and reads $2.9$.}
    \label{fig:main}
\end{figure*}

\paragraph{Far above 2-bit-class methods at $2.3~\bpw$.}
At $2.3$--$2.5~\bpw$ the four post-training methods lose $12$--$26$ MMLU-Pro points
and $38.50$--$86.25$ GSM8K points relative to BF16 (Table~\ref{tab:main},
Figure~\ref{fig:main}(a)), whereas \duorate at $2.3~\bpw$ loses $4.22$ and $3.00$.
Its average accuracy of $73.63$ exceeds that of the strongest of them, MoEQuant at
$2.5~\bpw$, by $15.95$ points, with $35.5$ more on GSM8K and a perplexity of $9.97$
against $13.81$.  It also beats \bitsmoe on every benchmark --- by $6.43$ MMLU-Pro,
$2.25$ MMLU and $10.25$ GSM8K points and by $32\%$ in perplexity --- while reading
$3.3\times$ less per decoded token.  Most of this margin comes from the weight format
rather than from allocation; Section~\ref{sec:exp-where} isolates the allocation with
controls in the same format.

\paragraph{On par with 4-bit formats at $2.9~\bpw$.}
Reading $2.9~\bpw$, \duorate stays within $0.86$ points of BF16 on MMLU-Pro and
$1.83$ on MMLU.  Its average of $76.16$ matches NF4, exceeds FP4 and is within $0.66$ points
of the best 4-bit format, HIF4, although every 4-bit format reads $1.4$--$1.6~\bpw$
more per activated weight (Figure~\ref{fig:main}(b)).  The formats keep an edge in
perplexity ($8.40$--$8.62$ against $9.07$) and store a comparable amount, $4.25$ to
$4.5~\bpw$ against \duorate's $4.27$; what \duorate saves is decode traffic.

\paragraph{Strongest under long decoding.}
The advantage is largest on GSM8K, where every generated token pays the read rate
again.  At $2.9~\bpw$ \duorate scores $91.00$, matching the best 4-bit format, HIF4 at
$4.5~\bpw$, is at or above the other four ($88.25$--$90.25$), and is within $0.25$
points of BF16.
At $2.6~\bpw$ it scores $89.50$, still matching or exceeding the $4.5~\bpw$ NF4 and FP4.  Two
properties of the read table suit this regime.  Its budget is charged only for the
experts a workload's tokens select, so no bits are spent on experts that are resident
but idle for that workload, and it is calibrated on held-out prompts of the same workload, so the
experts that carry a reasoning trace are profiled on the kind of tokens they will
process.  The decode traffic actually realized on GSM8K is lower still than the
budget: $2.82~\bpw$ at the $2.9$ budget and $2.56$ at $2.6$.  By contrast, \bitsmoe
reads $7.50~\bpw$ per decoded token while storing $2.54$ (Figure~\ref{fig:main}(a)).

\subsection{What the Allocation Is Worth}
\label{sec:exp-where}

\begin{figure*}[t]
    \centering
    \includegraphics[width=1.00\linewidth]{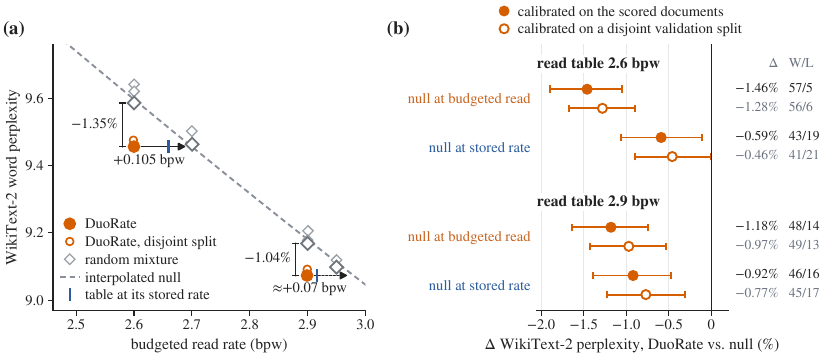}
    \caption{\textbf{What the allocation alone is worth} (WikiText-2).  \textbf{(a)}~Perplexity
    against the budgeted read rate: \duorate, random mixtures of two adjacent tiers (every
    draw), the interpolated uniform-tier null (dashed) and tables re-solved from a disjoint
    split (hollow).  Brackets: gap to the strongest mixture; arrows: extra read rate the
    mixture needs to match (dotted: extrapolated); blue ticks: each table at its stored
    rate.  \textbf{(b)}~Gap to the null placed at either rate, with $95\%$ bootstrap
    intervals.}
    \label{fig:worth}
\end{figure*}

Figure~\ref{fig:worth} prices the allocation against controls that keep the format and
discard the importance information.  Against the strongest random mixture of two adjacent tiers at
the same read rate, the allocated table wins by $1.35\%$ of perplexity at $2.6~\bpw$
($48$ of $56$ untied documents, $p=4.7\times10^{-8}$) and by $1.04\%$ at $2.9$ ($45$ of
$54$, $p=7.3\times10^{-7}$).  Expressed as rate, the
mixture must read $0.105~\bpw$ more to match the allocated table at $2.6$, and about
$0.07$ more at $2.9$, where the crossing is extrapolated.  Against a second mixture that
is dearer under both rate definitions, the comparisons are ties, so about a tenth of a
bit is the size of the effect rather than a lower bound on it.  The interpolated
uniform-tier null, which can be placed at either rate definition, agrees
(Figure~\ref{fig:worth}(b)): the allocation beats it by $1.46\%$ and $1.18\%$ under the
budgeted read rate and by $0.59\%$ and $0.92\%$ under the table's own stored rate, all four
intervals excluding zero.  Because both controls share every property of the format,
these margins isolate the value of the importance information itself.  The WikiText-2 tables are calibrated on the scored test documents; re-solving them
from the disjoint validation split
changes perplexity by only $0.20\%$, and all four comparisons with the null still
exclude zero, so the gain does not come from calibrating on the scored documents.

\subsection{Choosing the Allocation Objective}
\label{sec:exp-ablation}

The two rate functionals also decide between allocation objectives.  We substitute
MC-MoE's objective~\citep{mcmoe}, which weights the distortion by routing frequency and
leaves the bit term unweighted, into the same bisection with the same distortion table
and cap.  At matched read rates of $2.6$ and $2.9~\bpw$ it loses to ours by $0.42\%$ and $0.61\%$ of perplexity,
a difference the sign test detects but the interval does not, while costing $0.29$ and
$0.43~\bpw$ less under \Rstore: weighting by frequency drives rarely routed experts to
the lowest tier, which saves stored rate but almost no read rate.  Neither
table dominates the other.  Which one is preferable depends on which rate the
deployment holds fixed, a choice only a two-rate formulation can make explicit.

\subsection{Efficiency Accounting}
\label{sec:exp-efficiency}

\begin{table}[t]
    \centering
    \small
    \caption{Resident footprint and batch-one decode traffic of the expert weights,
    implied by each artifact's stored and read rates.  $^{\ddagger}$Read budget; the
    realized decode traffic on GSM8K is lower.}
    \label{tab:efficiency}
    \begin{tabular}{l|cc|cc}
        \toprule
        \textbf{Method}
        & \multicolumn{2}{c|}{\textbf{Resident}}
        & \multicolumn{2}{c}{\textbf{Decode, batch 1}} \\
        \cmidrule(lr){2-3}\cmidrule(lr){4-5}
        & Store (\bpw) & GiB$\downarrow$ & Read (\bpw) & MB/token$\downarrow$ \\
        \midrule
        BF16             & 16.0000 & 54.000 & 16.0000 & 3623.9 \\
        \bitsmoe 2-bit   & 2.5425  & \best{8.581} & 7.5008  & 1698.9 \\
        \midrule
        \duorate $2.3$   & 4.2704  & 14.413 & 2.3000$^{\ddagger}$ & \best{520.9}$^{\ddagger}$ \\
        \duorate $2.6$   & 4.2704  & 14.413 & 2.6000$^{\ddagger}$ & 588.9$^{\ddagger}$ \\
        \duorate $2.9$   & 4.2704  & 14.413 & 2.9000$^{\ddagger}$ & 656.8$^{\ddagger}$ \\
        \bottomrule
    \end{tabular}
\end{table}

The two rates rank the artifacts differently: \bitsmoe is cheaper to hold, $8.58$~GiB
of expert weights against $14.41$ for \duorate, and dearer to decode from, $1{,}699$~MB
per token against $521$--$657$ (Table~\ref{tab:efficiency}).  An allocator that
minimizes one number cannot see this trade.  These figures are computed from the
storage layout; measuring them end to end requires a prefix-read kernel, which we leave
to future work.

\subsection{When the Read Rate Matters}
\label{sec:exp-when}

Reading less than one stores pays most in the regime this paper targets, batch-one
decoding of long outputs, where every generated token pays the read rate again.  The
benefit narrows in two directions.  As the batch grows, the probability that a decode
step touches expert $e$, $1-(1-p_e)^{b}$, saturates, and the routing-weighted rate of
Equation~\eqref{eq:two-rates} approaches the unweighted one, so a large-batch
deployment should read close to what it stores.  For short generations, prefill,
which reads the full stored depth, dominates the traffic.  Consistent with this,
\duorate's standing against formats of comparable storage
is strongest on GSM8K, the one benchmark with long decoding.  Neither limit calls for
a different artifact: a deployment moves along this range by changing the read
budget, without re-encoding any weight.  Changing the budget is also cheap in
practice: profiling a workload takes one forward pass over its prefill, solving a read
table takes seconds on a CPU, and the table itself is a few kilobytes of tier
indices.

\section{Limitations}
\label{sec:limitations}

Our evaluation covers one MoE model, Qwen3-30B-A3B; extending it to other routing
schemes and expert counts is natural future work.  The efficiency figures are derived from the
storage layout rather than measured with a fused prefix-read kernel.  In our
measurements prefill also runs at the read depth rather than from the stored tier,
so the reported quality is a conservative estimate of the two-stage deployment.
Finally, read tables are calibrated per workload; recalibration is cheap, one dot
product per expert, but a deployment whose request mix drifts must schedule it.

\section{Conclusion}
\label{sec:conclusion}

An MoE serving path charges weight precision twice: once as the resident capacity
that prefill also pays, and once as the traffic of every batch-one decode step.  A
prefix-readable stream makes the two independently settable, and a released 2-bit
checkpoint shows how far they can drift apart when only one is constrained: the same
bytes that cut its resident footprint by $6.29\times$ cut decode traffic by only
$2.13\times$.  Writing both rates down also shows that routing frequency cannot order
experts in the decode objective, so the read side can be allocated per workload,
without labels, by one dot product per expert and a bisection.  \duorate builds on
this: it stores one tier once and reads a per-workload prefix of it, so a deployment
keeps a single artifact with two rates.  On Qwen3-30B-A3B it outperforms 2-bit-class
methods at $2.3~\bpw$, matches 4-bit formats at $2.9~\bpw$, and on long decoding
matches or exceeds them while staying within a quarter point of BF16.  We advocate
that efficiency claims for MoE checkpoints report both rates, because a checkpoint
that is cheap to store need not be cheap to decode.  The natural next step is to
measure the read-side savings end to end with a fused prefix-read kernel.

\subsubsection*{Acknowledgments}
We used generative AI tools to help develop the conceptual framework.  We also
used them to write the code, to interpret results, to draft and edit the text and
to produce the figures.  The authors take full responsibility for the content of
this paper.  We thank Twisha Piyush Mehta for the effort and contributions during an
internship.

\bibliographystyle{iclr2027_conference}
\bibliography{refs}

\clearpage
\section*{Supplementary Material}
\appendix
\renewcommand{\thefigure}{S\arabic{figure}}
\renewcommand{\thetable}{S\arabic{table}}
\renewcommand{\theequation}{S\arabic{equation}}
\renewcommand{\theHfigure}{S\arabic{figure}}
\renewcommand{\theHtable}{S\arabic{table}}
\renewcommand{\theHequation}{S\arabic{equation}}
\setcounter{figure}{0}
\setcounter{table}{0}
\setcounter{equation}{0}
\section{Positioning \duorate Among Per-Expert MoE Allocators}
\label{app:positioning}

\begin{table}[h]
\centering
\caption{Objective-function audit of per-expert MoE allocators, read from the
published objectives~\citep{mcmoe,mcsharp,mxmoe,gemq,bitsmoe,mode,alphaq,mopeq,dynaexq,edgemoe,hobbit}.  Rows are restricted to methods whose objectives we
verified in full.}
\label{tab:audit}
\footnotesize
\setlength{\tabcolsep}{4pt}
\resizebox{\linewidth}{!}{%
\begin{tabular}{L{1.6cm}L{1.7cm}L{2.3cm}L{2.6cm}L{1.9cm}cc}
\toprule
\textbf{Method} & \textbf{Unit} & \textbf{Rate constraint} & \textbf{Distortion signal}
& \textbf{Frequency enters} & \textbf{Store $\ne$ read} & \textbf{Paired} \\
\midrule
MC-MoE / MC\# & expert & unweighted, per block
& output error $\times$ freq.$^{\alpha}$gate$^{\beta}$ & distortion prior & \no & \no \\
MxMoE & expert $\times$ proj.\ pair & memory, per layer
& block-output error & cost term only & \no & \no \\
GEMQ & expert, global & unweighted, global
& leave-one-out, CE-gradient & implicit at most & \no & \no \\
\bitsmoe & spectral component & unweighted, per layer
& spectral energy $\times$ analytic $D(b)$ & importance weight & \no & \no \\
MODE & expert, per layer & unweighted, per layer
& modality freq.\ $\times$ KL & distortion weight & \no & \no \\
AlphaQ & projection matrix & unweighted, global
& analytic weight-domain & none & \no & \no \\
MoPEQ & expert & none
& Hessian trace & deliberately none & \no & \no \\
DynaExq & expert, per layer & HBM envelope
& none & is the rule & separate copies & \no \\
EdgeMoE & expert & none
& per-expert accuracy drop & none & \no & \no \\
HOBBIT & expert, per token & none
& criticality heuristic & selects the copy & by cache policy & \no \\
\midrule
\textbf{\duorate} & expert & \textbf{two}: unweighted \emph{and} frequency-weighted
& measured $A\times X$, margin-weighted & read budget only
& \textbf{\yes} & \textbf{\yes} \\
\bottomrule
\end{tabular}%
}

\vspace{2pt}
\begin{minipage}{0.98\linewidth}
\scriptsize
\emph{Note.} ``Rate constraint'' is the inequality the published solver holds;
``Frequency enters'' is where routing statistics appear in the objective;
``Store $\ne$ read'' is whether the resident width and the batch-one decode width
are distinguished; ``Paired'' is whether any accuracy comparison uses a paired
significance test.
\end{minipage}
\end{table}

Three things are visible in Table~\ref{tab:audit}.  First, no published allocator
writes the stored and the read rate as two functionals.  The systems that do vary the
width a decode step reads do so through a residency policy, an accuracy budget or a
cache rather than through a rate: serving systems keep more than one precision
resident~\citep{dynaexq,pagedweight}, EdgeMoE~\citep{edgemoe} shrinks each expert
to a per-expert accuracy budget, and HOBBIT~\citep{hobbit} substitutes a
lower-precision copy when an expert misses the cache.  Each therefore pays the
difference between the two rates without writing either one down.  Second, no published row
puts frequency in the rate constraint, so formulations
that describe themselves as frequency-aware are using the unweighted capacity
constraint as the channel through which routing frequency acts --- a route
Proposition 2 in the main paper makes explicit and none of them names.  Third,
MoPEQ's decision to abandon frequency is empirical; Proposition 1 in the main
paper supplies the condition under which it is right.

Two further facts about these allocators come from their own released artifacts.
MC-MoE's nine released allocations hold the total bit count of every block to a
single value, so its solver has no cross-layer freedom by construction.  GEMQ's
three released Qwen3-30B-A3B allocations carry almost no routing-frequency signal
(Spearman $0.07$--$0.21$ between assigned width and decode frequency), which is
why its stored and read rates nearly coincide --- not because it constrained the
read axis, but because its distortion weighting does not scale with routing
count.

\paragraph{Why there is no Hessian-trace arm.}
MoPEQ constrains no rate at all: it clusters a scalar sensitivity into tiers.  Any
rate-matched instantiation of it would therefore be our own rate-constrained
construction wearing a different importance score, and would measure our allocator
rather than theirs.

\section{Frequency Cancellation: Algebra}
\label{app:cancellation}

\paragraph{Proof of Proposition 1 in the main paper.}
For $\lambda\ge0$ the Lagrangian relaxation of the stated problem is
$L(M,\lambda)=\sum_{e} f_e[d_e(M_e)+\lambda R(M_e)]-\lambda B\sum_{e} f_e$, whose
second term does not involve the decision variables.  The first term is a sum in
which the $e$-th summand depends only on $M_e$, so the minimization separates
across experts.  For each $e$ the factor $f_e$ is positive and common to both
terms of the summand, and scaling an objective by a positive constant does not
move its $\arg\min$; hence
$\arg\min_M f_e[d_e(M)+\lambda R(M)]=\arg\min_M[d_e(M)+\lambda R(M)]$, which
involves no component of $f$.  Sweeping $\lambda$ therefore traces the same
one-parameter family of allocation vectors for every positive weight vector.  The
budget condition $\sum_e f_e R(M_e(\lambda))\le B\sum_e f_e$ does depend on $f$,
so $f$ selects which member of that family is returned; it does not change the
family.  \qed

\paragraph{Correspondence with PrismQuant.}
Theorem~1 of \citet{prismquant} states, for a Gaussian-mixture source with
component weights $\pi_c$, that the rate--distortion-optimal allocation is
characterized by a single water level common to all components, because $\pi_c$
multiplies both the rate and the distortion contribution of component $c$ and
cancels in the stationarity condition.  Proposition 1 in the main paper is
the discrete, operational form of the same fact: the components are experts, the
weights are routing frequencies, the eigenmodes are replaced by a finite tier
set, and the water level is $\lambda$.  Nothing in the argument depends on
Gaussianity or on a closed-form distortion.

\paragraph{The promotion order (Corollary 1 in the main paper).}
With $\mathcal M$ ordered by increasing rate, expert $e$ is promoted from $M$ to
the next precision $M'$ as $\lambda$ decreases through
$\lambda_e(M\!\to\!M')=[d_e(M)-d_e(M')]/[R(M')-R(M)]$, the exchange rate between
distortion and bit-width along that step.  The order in which experts are
promoted as the budget grows is the order of these thresholds, and no $f_e$
appears in any of them (Figure 2 in the main paper).  A tier that lies above the
lower convex hull of its expert's distortion--rate points is never an $\arg\min$ and
is skipped; the thresholds are then taken between consecutive hull vertices rather
than adjacent tiers, and still contain no $f_e$.

\paragraph{Where frequency does act (the shadow price).}
Suppose the objective carries a bit-width term whose weight $w_e$ is not
proportional to the distortion weight $f_e$, next to the read term weighted by the
normalized decode count $\tilde f_e$,
\[
  J_e(M)=f_e\,d_e(M)+\lambda_{\mathrm s}w_e\,R(M)
         +\lambda_{\mathrm b}\tilde f_e\,R(M),
  \qquad \tilde f_e=f^{\mathrm{dec}}_e\big/\!\sum\nolimits_{e'}f^{\mathrm{dec}}_{e'} .
\]
Dividing by $f_e>0$ leaves the $\arg\min$ unchanged, so
\[
  \frac{J_e(M)}{f_e}=d_e(M)+\pi_e R(M),
  \qquad
  \pi_e=\frac{\lambda_{\mathrm s}w_e}{f_e}+\frac{\lambda_{\mathrm b}\tilde f_e}{f_e},
\]
and the assignment depends on the frequencies only through the two ratios
$w_e/f_e$ and $\tilde f_e/f_e$.  Two special cases recover the statements of the
main paper.  If $w_e\equiv1/E$ --- the stored and prefill functional of
Eq.~(2) in the main paper --- the first ratio is $1/(Ef_e)$, decreasing in
$f_e$, so a frequently routed expert reaches a given precision at lower marginal
cost and the assignment follows routing frequency (Figure~\ref{fig:two-channels}a).  If in addition
$f^{\mathrm{dec}}_e\propto f_e$, the second ratio is constant, and setting
$\lambda_{\mathrm s}=0$ returns Proposition 1 in the main paper exactly.

\paragraph{The cross-domain case.}
Dividing Eq.~(3) in the main paper by $F_e>0$ leaves the $\arg\min$
unchanged, so the total frequency of an expert cancels exactly as in
Proposition 1.  The bracket, however, is an average of the per-domain curves with
expert-specific weights $\pi_{d\mid e}$, and two experts with the same curves but
different domain mixtures face different averaged curves and can take different
tiers at one $\lambda$ (Figure~\ref{fig:two-channels}b).  If every expert had
the same mixture the bracket would be one fixed average for all experts and
Proposition 1 would apply verbatim.

\begin{figure}[h]
\centering
\includegraphics[width=\linewidth]{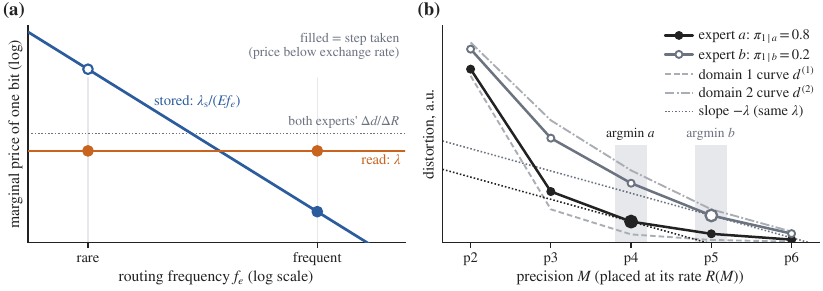}
\caption{Where routing frequency survives (Proposition 2 in the main paper);
illustrative values.  \textbf{(a)}~Divided by $f_e$, a bit costs $\lambda_{\mathrm s}/(Ef_e)$
under the stored functional but a constant $\lambda$ under the read one.
\textbf{(b)}~Two experts with the same per-domain curves but different mixtures
$\pi_{d\mid e}$ take different tiers at one $\lambda$.}
\label{fig:two-channels}
\end{figure}

\paragraph{The mirror-image failure.}
Dropping $f_e$ from the distortion while keeping it in the bit-width term gives
$J_e(M)=d_e(M)+\lambda\tilde f_e R(M)$, whose marginal cost tends to zero for an
expert the calibration stream never routes.  Such experts are then promoted to
the deepest precision at no charge, so the rate the solver books is not the rate
the artifact costs.  The two asymmetries are therefore not interchangeable:
removing $f$ from the bit-width term makes the assignment follow routing
frequency, and removing it from the distortion term makes the assignment follow
the absence of it.  The tie rule of Section 3.2 in the main paper is the
operational guard against the second.

\paragraph{Why no integer program is needed.}
Where every expert's tier family is convex, the per-expert Lagrangian of
Eq.~(6) in the main paper attains the global optimum over the assignments a
multiplier can reach, and a hull-restricted exact solver returns the same table.  A
dominated tier, one lying on or above the chord joining its two neighbours in the
(rate, distortion) plane, is a point a Lagrangian sweep cannot return and an exact
integer program could.  Between the multiplier's breakpoints the bisection returns
the feasible assignment nearest the budget rather than one exactly at it; every
budget in this paper is met to $0.001~\bpw$, and the residual is reported as part of
each table's rate rather than assumed away.  A tier family that is \emph{not}
convex leaves room an allocator can take, which is where the earlier-generation
gains of Appendix~\ref{app:hif} came from.

\section{Protocol, Rate Definitions and Secondary Scores}
\label{app:protocol}

\paragraph{The stored stream, in full.}
Section 3 in the main paper states only what the allocator consumes; the full
layout, following \citet{stagq}, is as follows.  Each group of $32$ columns stores one base pair $(lo,s)$, and each
weight a two-bit base code $q\in\{0,1,2,3\}$ followed by up to four one-bit
refinement planes $b_3,\dots,b_6$ held in plane order rather than interleaved per
scalar, so the planes above a selected depth form a contiguous suffix that need
not be fetched.  Plane $k$ records the sign of the residual its prefix left and
moves the reconstruction by $\pm s/2^{k-1}$.  Writing $u_M$ for the integer formed
by the bits read through tier $M$, the bulk reconstruction is one affine map,
\begin{equation}
  \widehat W^{(M)} = lo_M + s_M\,u_M,
  \qquad
  u_M = q\,2^{M-2}+\sum\nolimits_{k=3}^{M} b_k\,2^{M-k},
  \label{eq:bulk-affine}
\end{equation}
whose two constants $s_M=s/2^{M-2}$ and $lo_M=lo-s(2^{M-2}-1)/2^{M-1}$ follow in
closed form from the one stored pair; changing an expert's precision therefore
changes a prefix length and two derived constants, with no re-encoding, no
re-fitting and no reordering of weights (Figure~\ref{fig:stream}).  Besides the bulk fields the checkpoint
carries AWQ-style per-channel scaling, two 16-entry codebooks per matrix that
quantize the group metadata (one in $\log_2 s$, one in $lo/s$), and a sparse side
record holding the few weights the grid serves worst, which in this checkpoint is one
fixed-size entry per output row.  The side record is common to every tier, and every
rate and every quality number in this paper includes all of it.  A zero residual is
coded as a positive sign, so each refinement plane costs exactly one bit per
weight~\citep{stagq}, and every configuration sits on one scale.  The measurements of
Appendix~\ref{app:earlier} run on two earlier instances of the same
five-precision design whose audited ladders differ from each other by at most
$0.014~\bpw$; their
numbers are never mixed with the main tables.

\begin{figure}[h]
\centering
\includegraphics[width=\linewidth]{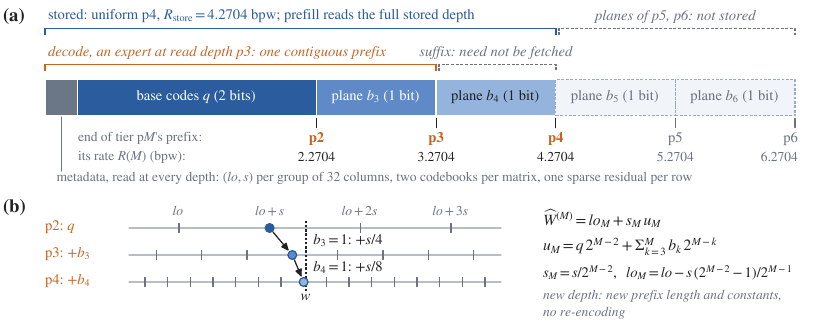}
\caption{The stored stream (schematic).  \textbf{(a)}~One expert matrix in plane
order, widths proportional to bits per weight: metadata read at every depth, base
codes $q$, then planes $b_3,\dots,b_6$; tier p$M$ is a contiguous prefix ending at
$R(M)$.  \duorate stores p4 and decode reads a p2, p3 or p4 prefix.
\textbf{(b)}~Equation~\eqref{eq:bulk-affine}: plane $k$ moves the reconstruction by
$\pm s/2^{k-1}$.}
\label{fig:stream}
\end{figure}

\begin{table}[h]
\centering
\caption{The floor of our ladder, recomputed field by field from the released base
checkpoint over the $28{,}991{,}029{,}248$ expert weights.  \emph{Measured}: count
from the checkpoint; \emph{spec}: word length from the format specification.  Rows are rounded
individually; the total is summed before rounding.}
\label{tab:ownfields}
\small
\begin{tabular}{lrl}
\toprule
Field & \bpw & Provenance \\
\midrule
Base code, two bits per weight      & 2.000000 & measured \\
Group record (\emph{lo}, \emph{s}), 8\,bits per group of 32 & 0.250000 & measured \\
Two 16-entry codebooks per matrix   & 0.000326 & measured count, spec word \\
Sparse residual record, one per row & 0.020074 & measured count, spec word \\
\midrule
\textbf{Total, p2}                  & \textbf{2.270399} & \\
Published floor                     & 2.2704 & \\
Difference                          & $-7\times10^{-7}$ & \\
\bottomrule
\end{tabular}
\end{table}

One field named in the layout above is absent from the table.  The per-channel
scaling of the gate and up projections is made layer-shared by the encoder so that
its reciprocal folds into the preceding normalization at deployment, which is why it
costs the expert segment nothing; only the down projection keeps a per-expert
vector, and charging that at half precision would add $0.00260~\bpw$ to the floor.
The residual of $7\times10^{-7}$ against the published floor is not an
error in either direction: the published floor rounds the rate of each of the $18{,}432$
matrices to four decimals before averaging, and this audit sums the unrounded fields.  Every
rate in this paper uses the published floor, so both the omission and the rounding
are constant across our configurations and cancel in every comparison among them.

\paragraph{Three rate definitions.}
Three rates are distinguished throughout and never interchanged.
The \emph{stored} rate of a table is the unweighted mean of $R(M_e)$ over its
$6{,}144$ experts, \Rstore{} of Eq.~(2) in the main paper: $4.2704~\bpw$ for the
resident p4 tier and, for a read table, the footprint it would have if it were kept
resident.  The \emph{budgeted read}
rate is $\sum_e f^{\mathrm{cal}}_e R(M_e)/\sum_e f^{\mathrm{cal}}_e$ with
$f^{\mathrm{cal}}$ a prefill-granularity routing count on the served domain.  The
\emph{realized read} rate replaces $f^{\mathrm{cal}}$ by the decode-step routing
count collected on the same domain's generations.  One subtlety: the allocator's
bisection meets its target against the frequencies carried in the calibration
activation table, and it does so to within $0.001~\bpw$, while
Table~\ref{tab:ledger} recomputes the same rate from the independent routing
dump where one exists; the two populations are close but not identical, which is
why the GSM8K row at $2.9~\bpw$ reads $2.879$ where the solver records
$2.8999$.  Both are the same definition and neither is a decode rate.  All three
are computed from the table's tier histogram and the ladder of Eq.~(1) in the main
paper, never taken from the nominal budget: in an earlier experiment, a table budgeted
under one benchmark's routing and served under another's read $5.5~\bpw$ against a
$3.6~\bpw$ budget.  Per-token traffic follows from the read rate: a decode step
activates $48\times8$ experts of $4{,}718{,}592$ weights each, $1.81\times10^{9}$
weights, so a $2.9~\bpw$ read corresponds to $657$~MB per token.  Table~\ref{tab:storeread} lists the stored and read rates of the
\duorate rows of Table 1 in the main paper alongside the uniform tiers.

\begin{table}[h]
\centering
\caption{Stored and read rates of the uniform tiers of our ladder and of the \duorate
rows of Table 1 in the main paper, with BF16 and \bitsmoe for reference.  A uniform
tier stores and reads the same width; \duorate stores p4, and its \emph{Read} is the
budget each benchmark's table was solved to.  Uniform p4 was not scored on GSM8K.}
\label{tab:storeread}
\small
\resizebox{\linewidth}{!}{%
\begin{tabular}{l|cc|c|ccc|c}
\toprule
\textbf{Method} & \textbf{Store} & \textbf{Read} & \textbf{PPL}$\downarrow$
& \textbf{MMLU-Pro} & \textbf{MMLU} & \textbf{GSM8K} & \textbf{Avg.} \\
\midrule
BF16              & 16.000 & 16.000 & 8.27  & 56.86 & 83.30 & 91.25 & 77.14 \\
\midrule
Uniform p2        & 2.2704 & 2.2704 & 10.08 & 52.64 & 80.00 & 88.00 & 73.55 \\
Uniform p3        & 3.2704 & 3.2704 & 8.69  & 55.14 & 81.96 & 88.50 & 75.20 \\
Uniform p4        & 4.2704 & 4.2704 & 8.37  & 57.21 & 82.67 & \na   & \na   \\
\midrule
\duorate          & 4.2704 & 2.900  & 9.07  & 56.00 & 81.47 & 91.00 & 76.16 \\
\duorate          & 4.2704 & 2.600  & 9.46  & 53.86 & 80.56 & 89.50 & 74.64 \\
\duorate          & 4.2704 & 2.300  & 9.97  & 52.64 & 80.00 & 88.25 & 73.63 \\
\midrule
\bitsmoe 2-bit    & 2.5425 & 7.5008 & 14.69 & 46.21 & 77.75 & 78.00 & 67.32 \\
\bottomrule
\end{tabular}%
}
\end{table}

\begin{table}[h]
\centering
\caption{Rates of the $2.6$ and $2.9~\bpw$ read tables, recomputed from their tier
histograms and the ladder of Eq.~(1) in the main paper.  $R_{\mathrm{store}}$ is the
unweighted rate of the read table itself, not of the resident p4 tier;
$R^{\mathrm{cal}}_{\mathrm{read}}$ is the budgeted rate under prefill routing;
$R^{\mathrm{dec}}_{\mathrm{read}}$ uses decode-step routing and exists only for
GSM8K, since the log-likelihood benchmarks have no decode phase.}
\label{tab:ledger}
\small
\begin{tabular}{llrrrr}
\toprule
Domain & Budget & $R_{\mathrm{store}}$ & $R^{\mathrm{cal}}_{\mathrm{read}}$
& $R^{\mathrm{dec}}_{\mathrm{read}}$ & $R^{\mathrm{dec}}-R^{\mathrm{cal}}$ \\
\midrule
GSM8K & 2.6 & 2.464 & 2.589 & 2.559 & $-0.029$ \\
      & 2.9 & 2.691 & 2.879 & 2.818 & $-0.061$ \\
\midrule
MMLU  & 2.6 & 2.567 & 2.587 & --- & --- \\
      & 2.9 & 2.822 & 2.880 & --- & --- \\
WikiText-2 & 2.6 & 2.659 & 2.600 & --- & --- \\
           & 2.9 & 2.917 & 2.899 & --- & --- \\
MMLU-Pro   & 2.6 & 2.546 & 2.600 & --- & --- \\
           & 2.9 & 2.809 & 2.900 & --- & --- \\
\bottomrule
\end{tabular}
\end{table}

The stored rate of a read table is $0.02$--$0.19~\bpw$ below its budgeted read
rate on every row except WikiText-2, where it is $0.02$--$0.06$ above; the experts a
workload never routes to sit at the lowest tier and weigh nothing in the read
functional, which is what pulls the stored rate down.  On GSM8K the realized decode
traffic is a further $0.03$--$0.06~\bpw$ below the budgeted rate, and $0.04$--$0.08$
below the nominal budget, because decode routes a somewhat different population than
the prompts the budget was set on.  The GSM8K and MMLU rows recompute
$R^{\mathrm{cal}}_{\mathrm{read}}$ from an independent routing dump, while the
WikiText-2 and MMLU-Pro rows carry the frequency the allocator was solved against,
which is why they land on their targets.  The share of decode traffic falling on
experts a workload's own prompts never reached is $0.48\%$ on GSM8K and rises to
$20\%$ only for MMLU, whose prompts are $36$--$89$ tokens long: short prompts are the
least informative calibration.

\paragraph{Instrument resolution.}
Perplexity is a continuous quantity paired per document, and a paired sign test
over $62$ documents separates configurations that accuracy over $1{,}400$
questions cannot.  A whole tier step of $1.000~\bpw$ is worth $-13.74\%$
perplexity ($62$ of $62$ documents) but only $+2.50$ MMLU-Pro points ($p=0.017$),
so accuracy cannot resolve an effect of about a tenth of a bit, the size Section 4.2
in the main paper measures for the allocation.
Accuracy differences below about one point are therefore reported as ties unless a
paired test says otherwise, and the rate order of the allocated configurations is
read from perplexity.

\paragraph{Which statistic the intervals are computed on.}
Word perplexity is $\exp(-\sum_i \ell_i / \sum_i n_i)$ over documents, a ratio of
sums rather than a mean of per-document values, so a comparison has two natural
summaries: the relative change of that aggregate, and the mean of the per-document
relative changes.  They are not close on this corpus, because the $62$ documents
differ in length by more than an order of magnitude and the aggregate is
token-weighted.  Every perplexity interval in this paper is a percentile bootstrap over
documents computed on the \emph{same} statistic as the point estimate beside it,
namely the aggregate, with $20{,}000$ resamples at seed $42$; ties are excluded
from the sign test and reported separately.  Figure~\ref{fig:waterfall} shows the
per-document changes behind three of the comparisons: the aggregate gains are not
carried by a few documents.

\begin{figure}[h]
\centering
\includegraphics[width=\linewidth]{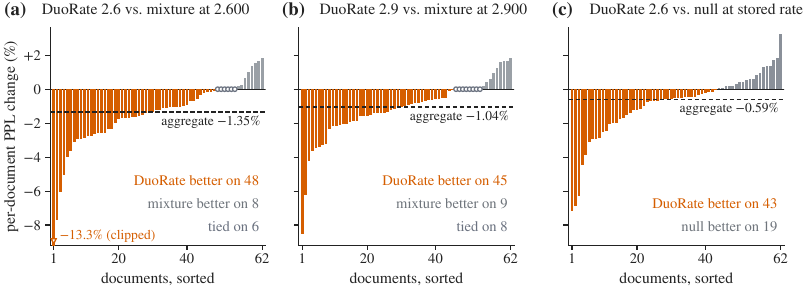}
\caption{Per-document perplexity change of \duorate on WikiText-2, sorted over the
$62$ documents.  \textbf{(a, b)}~Against a random mixture of two adjacent tiers at the
table's read rate (least favourable draw); \textbf{(c)}~against the interpolated null
at the stored rate, the weakest comparison in the upper block of
Table~\ref{tab:nullblock}.  Dashed: the aggregate change;
open circles: exact ties, excluded from the sign test.}
\label{fig:waterfall}
\end{figure}

\paragraph{The interpolated uniform-tier null.}
Table~\ref{tab:nullblock} compares a mixed table against a synthetic control
rather than against a run, which is what lets it cover both rate definitions,
where a real mixture exists only at the rates we materialized.  For a mixed table at rate $r$ lying
between the uniform tiers at rates $r_{\mathrm{lo}}<r<r_{\mathrm{hi}}$, the null
assigns document $i$ the log-likelihood
$\ell^{\mathrm{null}}_i=\ell^{\mathrm{lo}}_i+\alpha(\ell^{\mathrm{hi}}_i-\ell^{\mathrm{lo}}_i)$
with $\alpha=(r-r_{\mathrm{lo}})/(r_{\mathrm{hi}}-r_{\mathrm{lo}})$, so it is what
the same format would deliver at that rate if precision could be bought
continuously and the allocation carried no information.  Because $\alpha$ is fixed
by the rate, the two rate definitions place the null at two different points and give
two different tests, which is why both are reported.  The comparison is per
document, so the sign test and the document bootstrap apply unchanged.  On the two
budgeted cells where a real mixture was also run, the two controls agree in sign and
within $0.11$ to $0.14$ percentage points ($1.35$ against $1.46\%$, $1.04$ against $1.18\%$);
that agreement is a calibration of the synthetic null against the real control, not
a substitute for running it.

\begin{table}[h]
\centering
\caption{The WikiText-2 read tables against an interpolated uniform-tier null at
their own rate, placed at the budgeted read rate and at the stored rate.  Lower block: tables re-solved from the
WikiText-2 validation split.  Intervals are percentile bootstraps on the aggregate
perplexity.}
\label{tab:nullblock}
\small
\begin{tabular}{ll|rl|r}
\toprule
\textbf{Arm} & \textbf{Rate definition} & \textbf{$\Delta$} & \textbf{Wins\,/\,losses, 95\% CI} & \textbf{$p$} \\
\midrule
Read $2.6$ & budgeted & $-1.46\%$ & 57 / 5,  $[-1.90,-1.05]$ & $3.1\times10^{-12}$ \\
           & stored   & $-0.59\%$ & 43 / 19, $[-1.06,-0.11]$ & $0.0032$ \\
Read $2.9$ & budgeted & $-1.18\%$ & 48 / 14, $[-1.64,-0.74]$ & $1.7\times10^{-5}$ \\
           & stored   & $-0.92\%$ & 46 / 16, $[-1.39,-0.47]$ & $0.00018$ \\
\midrule
\multicolumn{5}{l}{\emph{with the calibration split disjoint from the scored documents}} \\
Read $2.6$ & budgeted & $-1.28\%$ & 56 / 6,  $[-1.67,-0.90]$ & $3.0\times10^{-11}$ \\
           & stored   & $-0.46\%$ & 41 / 21, $[-0.90,-0.01]$ & $0.0151$ \\
Read $2.9$ & budgeted & $-0.97\%$ & 49 / 13, $[-1.43,-0.53]$ & $4.8\times10^{-6}$ \\
           & stored   & $-0.77\%$ & 45 / 17, $[-1.23,-0.31]$ & $0.0005$ \\
\bottomrule
\end{tabular}
\end{table}

\paragraph{Paired tests behind Figure 4 in the main paper, and the adjacent tiers.}
Table~\ref{tab:where} lists the mixture comparisons of Figure 4 in the main paper with
their win/loss counts, and compares the $2.9~\bpw$ table with the uniform tiers on
either side of it.  Against uniform p2, which reads $0.63~\bpw$ less, it wins on all
$62$ WikiText-2 documents and by $3.36$ MMLU-Pro points ($p=3.7\times10^{-4}$), the only one
of the table's six accuracy comparisons that survives a Bonferroni correction.  Against
uniform p3, which reads $0.37~\bpw$ more, it ties on all three accuracy benchmarks (the
largest difference, $+2.50$ on GSM8K, has $p=0.12$) while trailing by $4.39\%$ in
perplexity.

\begin{table}[h]
    \centering
    \small
    \caption{Paired comparisons of the \duorate read tables against controls in the
    same format.  Top: a random mixture of two adjacent tiers at the table's read
    rate and at a dearer one (worst of the draws).  Bottom: the uniform tiers below
    and above the $2.9~\bpw$ table.  Perplexity rows report document wins/losses, a
    bootstrap interval and an exact sign test; accuracy rows report exact McNemar.}
    \label{tab:where}
    \begin{tabular}{ll|rl|r}
        \toprule
        \textbf{Comparison} & \textbf{Benchmark / rate} & \textbf{$\Delta$} & \textbf{Wins\,/\,losses, 95\% CI} & \textbf{$p$} \\
        \midrule
        \multicolumn{5}{c}{\emph{against a random mixture of the same tiers}, WikiText-2} \\
        \midrule
        \duorate $2.6$ & mixture at $2.600$ & $-1.35\%$ & 48 / 8,  $[-1.76,-0.94]$ & $4.7\times10^{-8}$ \\
                       & mixture at $2.700$ & $-0.07\%$ & 32 / 23, $[-0.55,+0.47]$ & $0.28$ \\
        \duorate $2.9$ & mixture at $2.900$ & $-1.04\%$ & 45 / 9,  $[-1.43,-0.64]$ & $7.3\times10^{-7}$ \\
                       & mixture at $2.950$ & $-0.28\%$ & 34 / 21, $[-0.73,+0.24]$ & $0.10$ \\
        \midrule
        \multicolumn{5}{c}{\emph{against uniform p2} ($2.27~\bpw$)} \\
        \midrule
        \duorate $2.9$ & WikiText-2 & $-9.94\%$ & 62 / 0 & $4.3\times10^{-19}$ \\
                       & MMLU-Pro   & $+3.36$ pt    & 108 / 61 & $0.00037$ \\
                       & MMLU       & $+1.47$ pt    &  & $0.059$ \\
                       & GSM8K      & $+3.00$ pt    &  & $0.10$ \\
        \midrule
        \multicolumn{5}{c}{\emph{against uniform p3} ($3.27~\bpw$)} \\
        \midrule
        \duorate $2.9$ & WikiText-2 & $+4.39\%$ & 2 / 59, $[+3.47,+5.60]$ & $1.6\times10^{-15}$ \\
                       & MMLU-Pro   & $+0.86$ pt    &  & $0.33$ \\
                       & MMLU       & $-0.49$ pt    &  & $0.49$ \\
                       & GSM8K      & $+2.50$ pt    &  & $0.12$ \\
        \bottomrule
    \end{tabular}
\end{table}

\paragraph{How often the stage-one cap binds.}
Section 3.2 in the main paper restricts stage two to tiers at or below the
stored p4 tier.  Solving the WikiText-2 and MMLU-Pro read tables twice, once with
that restriction and once over p2--p6, the cap clips $15$ and $29$ experts for the
WikiText-2 tables at $2.6$ and $2.9~\bpw$ and $12$ and $17$ for MMLU-Pro, and changes
$23$ to $37$ of the $6{,}144$ tier assignments in all.

\paragraph{Calibration and evaluation items.}
Each read table is solved from the served benchmark's own prefill, so the
calibration domain and the scored domain coincide by construction.  Whether the
calibration \emph{items} coincide is a separate question, and it is audited rather
than assumed.  GSM8K calibrates on the training split, MMLU on the
validation split, and MMLU-Pro on test items the stratified $14\times100$ sample
does not draw; the builder refuses a calibration split equal to the evaluation
split outright.  Comparing every scored question against the calibration corpus as
normalized text, no scored GSM8K item appears in it, and of $1{,}397$
MMLU-Pro items three share a leading instruction block with it but none appears in
full.  MMLU has one true repeat in $1{,}419$, a question the dataset itself carries in both
its validation and its test split.
WikiText-2 is the exception: its activation moments were collected on the same
$62$ test documents that are scored, so for that benchmark we also solve the table
from the validation split and report both.

\paragraph{Secondary answer filter.}
Table 1 in the main paper reports GSM8K under the strict filter.  Under the
flexible-extract filter the same runs score $91.00$ and $89.00$ for \duorate at
$2.9$ and $2.6~\bpw$, $87.00$ and $89.50$ for uniform p2 and p3, and $78.50$ for
\bitsmoe, within one point of the strict scores; the paired tests of
Table~\ref{tab:where} use the strict filter.

\paragraph{Question counts.}
Accuracies quoted inside paired tests are computed over all scored questions
($n=1{,}425$ for MMLU, $1{,}400$ for MMLU-Pro, $400$ for GSM8K); the two exact
repeats in the $25$-per-subject MMLU sample are paired as separate items.

\paragraph{Tier histograms.}
The stored table is the uniform p4 tier on all $6{,}144$ experts
(Figure~\ref{fig:tierhist}).  The GSM8K read table
at a $2.9~\bpw$ budget assigns $3{,}861$ / $1{,}979$ / $304$ experts to
p2 / p3 / p4; the WikiText-2 tables are $3{,}973$ / $1{,}952$ / $219$ at a
$2.6~\bpw$ budget and $2{,}792$ / $2{,}733$ / $619$ at $2.9$.

\begin{figure}[h]
\centering
\includegraphics[width=\linewidth]{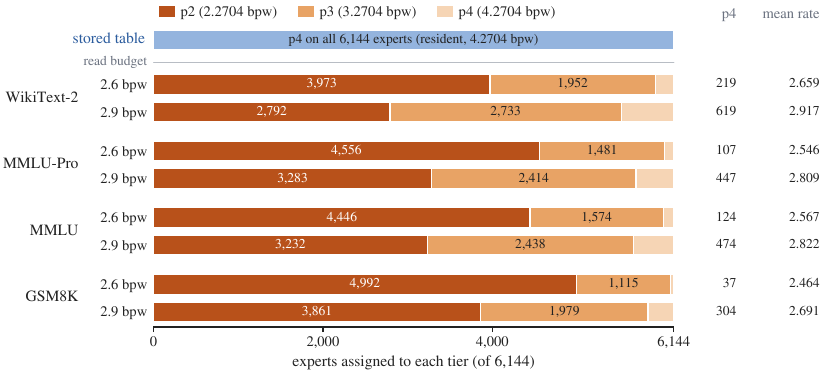}
\caption{Tier histograms of the $2.6$ and $2.9~\bpw$ read tables.  Each
bar splits the $6{,}144$ experts into p2 / p3 / p4; every read tier is a byte prefix of
the stored p4 table (top).  Right: the p4 count and the unweighted mean of $R(M_e)$,
the $R_{\mathrm{store}}$ column of Table~\ref{tab:ledger}.}
\label{fig:tierhist}
\end{figure}

\paragraph{Where the read budget goes.}
Figure~\ref{fig:layers} shows how the read tables spend their budget.  No layer-level
constraint is imposed, yet on every benchmark the first layers stay close to p2, the
rate rises toward the middle of the network, and the last layer is read deepest on
the three log-likelihood benchmarks.  Within a layer, the routing-weighted rate, which
is what decode pays, runs above the unweighted rate on MMLU-Pro, MMLU and most clearly
GSM8K: the deeper tiers go to the experts the workload actually routes to, while the
rarely routed experts of the same layer stay shallow, consistent with the
long-decoding result of Section 4.1 in the main paper.

\begin{figure}[h]
\centering
\includegraphics[width=1.00\linewidth]{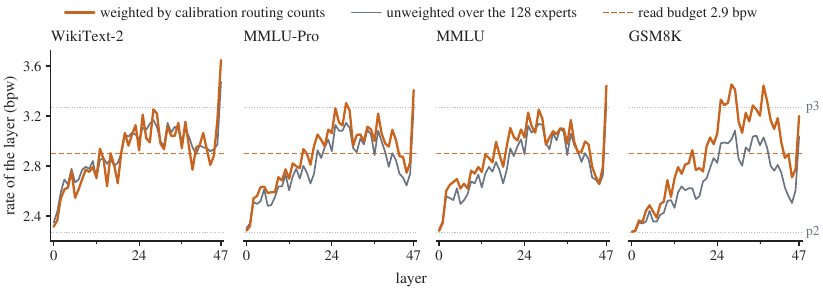}
\caption{Rate of each of the $48$ layers in the $2.9~\bpw$ read tables, weighted by
the calibration routing counts (what a decode step pays) and unweighted over the
$128$ experts of the layer.  Dotted lines mark the p2 and p3 tiers; the dashed line
is the read budget.}
\label{fig:layers}
\end{figure}

\paragraph{Normalization of pooled profiles.}
The output weights $A$ of the four pooled domains differ in global scale by a factor
of $45$, because the measured standard deviation $\sigma$ of the quantization-induced
margin perturbation differs per domain, and the per-domain distortion scales span four
orders of magnitude.  Each domain is therefore divided
by one global scalar before pooling.  Normalizing each expert's channel vector to
unit mean instead would also remove the difference in activation energy between
experts, which is part of the signal the allocator ranks by.

\paragraph{Coverage and the measured $\sigma$.}
The pooled output profile leaves $87$ of $18{,}432$ matrices without a measured
$A$ and $12$ without a measured $X$, against $1{,}482$ and $1{,}857$ for a
single-domain profile; a matrix without $A$ receives the median of the mean $A$
over matrices that have one, never $1$, since a unit weight would inflate its
distortion by four orders of magnitude.  On MMLU calibration prompts at
$3.2~\bpw$ the median top-two margin is $2.125$ and the quantization-induced
perturbation of the margin has standard deviation $\sigma=1.2525$; the realized
$\arg\max$ flip rate is $9.53\%$ against the Gaussian model's $14.03\%$.  On
GSM8K solutions the median margin is $9.75$ with $\sigma=3.956$ and a mean shift
of $-3.62$; including the shift changes $2.4\%$ of assignments and no measured
score, because it acts as a near-uniform multiplier that the Lagrangian absorbs
into $\lambda$.  One further observation bounds every prefill-side calibration:
re-running the prompt alone rather than the full sequence changes $1.27\%$ of
routing decisions, because sequence length changes the BF16 reduction order and
flips near-ties at the top-$k$ boundary.

\section{Audit of the External Baseline}
\label{app:bitsmoe}

\bitsmoe~\citep{bitsmoe} is the one published per-expert MoE allocator on this
base model whose weights are downloadable, so we re-score its released checkpoint
rather than re-implement the method.  Placing it on our rate axis
requires recomputing its rate from the artifact, because the label on a
checkpoint is not a rate.  This appendix reports that recomputation, which is
arithmetic on published equations and on the released tensors and is reproducible
from the checkpoint alone, and then the cross-protocol comparison.

\paragraph{What the released 2-bit checkpoint stores.}
Summing the \texttt{safetensors} headers of the released checkpoint over
$184{,}818$ tensors and dividing by the $28{,}991{,}029{,}248$ expert weights of
Qwen3-30B-A3B gives the per-field breakdown in bits per expert weight: quantized
payload $1.9583$; FP16 shared basis $0.3333$; group-128 scales $0.1240$; the
spectral-energy vector $0.0207$; rank indices $0.0413$; a bit-identical redundant
copy of those indices $0.0413$; container metadata $0.0156$; three
super-experts kept in FP16 $0.0078$.  The expert segment therefore stores
$2.5425~\bpw$ (the fields are rounded independently; the exact total is
$2.54246$), or $2.5494$ including the router, against a paper label of $2.0$
and an intermediate $2.2917$ under that paper's own
Equations~(96)--(98), which charge the payload and the FP16 basis but no side
channel.  Two independent checks confirm the $2.5425$ figure rather than the
label.  First, $2.5425~\bpw$ over the resident expert weights is $8.581$~GiB,
which reproduces the $8.58$ and the $6.29\times$ saving that paper reports in its
own efficiency table, whereas a true $2.0~\bpw$ would be $6.750$~GiB and
$8.00\times$; the same arithmetic identifies that table's unit as GiB rather than
the GB its header states, since BF16 comes to $54.000$~GiB and $57.98$~GB
against a reported $54.00$.  Second, its per-projection budget equation
$B_h=\lfloor E n\,b_{\mathrm{eq}} - 16n^2/m\rfloor$ evaluates to $436{,}906$ at
$b_{\mathrm{eq}}{=}2$, while every one of the $144$ blocks in the released
checkpoint carries exactly $513{,}365$ with zero variance.  That value is the
same expression with the reservation term's $m$ replaced by $3n$
($\lfloor 16n^{2}/(3n)\rfloor=10{,}922$ against $\lfloor 16n^{2}/m\rfloor=87{,}381$),
an eight-fold smaller
allowance for the shared basis and $17.5\%$ more budget than the stated equation
grants; charging the basis at $b_{\mathrm{eq}}$ rather than at $16$ bits is a
second, numerically indistinguishable reading.  The released code is
inference-only, so this is an observation about the artifact, and we do not
attribute it to a particular line.

\begin{figure}[h]
\centering
\includegraphics[width=\linewidth]{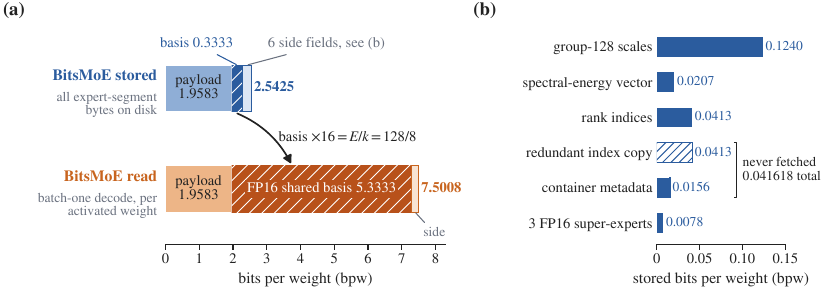}
\caption{Per-field rates of the released \bitsmoe 2-bit checkpoint, recomputed from
its safetensors headers.  \textbf{(a)}~The expert segment stores $2.5425~\bpw$; a
batch-one decode step charges the FP16 shared basis to the $k{=}8$ selected experts
rather than all $E{=}128$, and with the never-fetched fields refunded reads
$7.5008~\bpw$.  \textbf{(b)}~Stored side fields; hatched parts ($0.041618~\bpw$) are
never fetched.}
\label{fig:bitsmoe-fields}
\end{figure}

\paragraph{Why its stored rate and its read rate differ by $2.95\times$ (Figure~\ref{fig:bitsmoe-fields}).}
The shared basis is one FP16 matrix per layer and projection, shared by all
$E{=}128$ experts of that layer.  Charged against storage it costs
$16n/(E m)=0.3333~\bpw$.  Charged against a batch-one decode step, which touches
only the $k{=}8$ selected experts but must still project through the whole basis,
it costs $16n/(k m)=5.3333~\bpw$, an inflation of exactly $E/k=16$.  That the
projection is a per-layer and not a per-expert cost is stated by
\citet{bitsmoe} itself: its projection ``is computed once per MoE layer and reused
across routed experts''.  The released code matches, computing the gate and up
bases as unconditional dense products of the layer input before any routing is
applied, and the down basis as one dense product outside the packed kernel.

The read rate charges only the fields that kernel actually fetches.  The
checkpoint's \texttt{original\_indices}, \texttt{segments},
\texttt{original\_rank} and \texttt{groupsize} never reach it, and together they
are $0.041618~\bpw$: the released packer separates ``runtime-facing buffers'' from
``high-level mapping metadata'', the kernel's argument list contains only the
payload, the rank indices, the tile and slab metadata, the scales and the energy
vector, and \texttt{original\_indices} is written as a clone of the rank indices.
Refunding those four leaves a read rate of $7.5008~\bpw$ against a stored
$2.5425$, so the read field set is a strict subset of the stored one.  Per decoded
token and per MoE layer the basis moves $25.17$~MB and the eight selected experts'
private segments move $10.23$~MB, so the basis is $71.1\%$ of the traffic and the
total is $35.39$~MB against FP16's $75.50$~MB, a $2.13\times$ reduction where the
resident footprint fell by $6.29\times$.  Against our uniform p2 tier, which stores
$0.27~\bpw$ less, it moves $3.30\times$ more.  A formulation
that constrains only the unweighted functional has no term that can see the
difference, and none of the published allocators in Table~\ref{tab:audit} has one.  The basis
term scales as $E$ over the number of distinct experts a step touches, so it
shrinks as the batch grows and disappears only when a step reaches all $128$; the
figures here describe the batch-one decode regime that method measures in its own
efficiency table.

\begin{table}[h]
\centering
\caption{The published Qwen3-30B-A3B-Base results of \citet{bitsmoe} (upper
half, C4 perplexity, zero-shot) with our rows under our protocol (lower half,
WikiText-2 perplexity).  \textbf{The halves are not commensurable.}  \emph{Bits} is
the published label in the upper half; in the lower half it is store\,/\,read where
the two differ.}
\label{tab:crossprotocol}
\small
\resizebox{\textwidth}{!}{%
\begin{tabular}{l|c|c|ccccccc|c}
\toprule
\textbf{Method} & \textbf{Bits} & \textbf{PPL}$\downarrow$
& \multicolumn{7}{c|}{\textbf{Accuracy}$\uparrow$ (\%)} & \textbf{Avg.} \\
\cmidrule(lr){4-10}
& & & HellaS. & MathQA & MMLU & Openb. & WinoG. & GSM8K & HumanE. & \\
\midrule
\multicolumn{11}{c}{\emph{as published in \citet{bitsmoe}, their protocol}} \\
\midrule
FP16     & 16 & 10.24 & 81.35 & 60.03 & 78.77 & 45.00 & 72.85 & 83.47 & 56.10 & 68.22 \\
\cmidrule(lr){1-11}
HQQ      & 2  & 23.65 & 63.05 & 33.17 & 48.82 & 36.40 & 60.85 & 22.67 & 11.59 & 39.51 \\
GPTQ~\citep{gptq} & 2  & 15.63 & 70.16 & 24.89 & 39.17 & 39.40 & 60.62 & 4.32  & 0.00  & 34.08 \\
MiLo     & 2  & 21.53 & 62.82 & 31.83 & 44.51 & 35.40 & 59.98 & 19.64 & 7.93  & 37.44 \\
MoEQuant & 2  & 15.34 & 66.44 & 45.09 & 70.02 & 40.00 & 67.72 & 49.36 & 26.83 & 52.21 \\
\bitsmoe & 2  & 16.07 & 74.09 & 52.70 & 70.87 & 43.40 & 72.93 & 75.51 & 43.90 & \best{61.91} \\
\midrule
\multicolumn{11}{c}{\emph{ours, our protocol --- read down this half only}} \\
\midrule
BF16                            & 16     & 8.27  & \na & \na & 83.30 & 44.20 & 73.24 & 91.25 & 51.83 & \na \\
\bitsmoe, re-scored             & 2.5425 / 7.5008 & 14.69 & \na & \na & 77.75 & 43.20 & 72.14 & 78.00 & 35.37 & \na \\
\duorate $2.3$                  & 4.2704 / 2.300 & 9.97  & \na & \na & 80.00 & \na & \na & 88.25 & \na & \na \\
\duorate $2.9$                  & 4.2704 / 2.900 & 9.07  & \na & \na & 81.47 & \na & \na & 91.00 & \na & \na \\
\bottomrule
\end{tabular}%
}
\end{table}

\paragraph{Does our harness agree with theirs where it should?}
Re-scoring another group's checkpoint is only informative if the harness
reproduces their numbers on benchmarks where the two protocols already coincide.
Table~\ref{tab:anchor} runs that check.  On OpenBookQA and WinoGrande neither side
passes a few-shot count, so both take the harness default of zero, and both
benchmarks are scored on their full official splits with no subsampling on either
side; these are the four cells where nothing should differ.  They agree to within
$0.80$ points, with deviations in both directions, and the two unquantized rows,
where no quantization method is involved, agree as closely as the two quantized
ones.  We also
confirm from their reported accuracies that they use full test splits: each of
their figures divides back into an integer number of correct answers over the
official split size.

The two generative benchmarks behave differently, and consistently with what
greedy decoding on different hardware predicts.  Our HumanEval score for their
checkpoint is $8.53$ points below theirs and our GSM8K score is $2.49$ points
above; neither difference is significant ($p{=}0.11$ at $n{=}164$, $p{=}0.31$ at
$n{=}400$), and the opposite signs argue against a systematic bias.  A multiple-choice
item is scored by comparing the log-likelihood of a few fixed continuations, which
has no error-accumulation path; a generated program or derivation diverges
irrecoverably if one early token flips, and we decode in BF16 on an Ascend
910B3 where they decode in FP16 on an A100.  We therefore read the multiple-choice
cells as evidence that the harness is aligned and the generative cells as evidence
that $164$-question and $400$-question generative benchmarks cannot resolve
differences of this size.

\begin{table}[t]
    \centering
    \small
    \caption{Our re-scoring of the released \bitsmoe checkpoint and of the
    unquantized model against that paper's published numbers.  \emph{Aligned}: same
    shot count and same full split on both sides.  Our unquantized MMLU figure is the
    full test set; our 2-bit figure is the $25$-per-subject sample of Table 1 in the
    main paper.}
    \label{tab:anchor}
    \begin{tabular}{ll|ccc}
        \toprule
        \textbf{Benchmark} & \textbf{Arm} & \textbf{Published} & \textbf{Ours}
        & \textbf{$\Delta$} \\
        \midrule
        \multicolumn{5}{c}{\emph{aligned: 0-shot on both sides, full split on both sides}} \\
        \midrule
        OpenBookQA (acc\_norm) & unquantized & 45.00 & 44.20 & $-0.80$ \\
                               & 2-bit       & 43.40 & 43.20 & $-0.20$ \\
        WinoGrande (acc)       & unquantized & 72.85 & 73.24 & $+0.39$ \\
                               & 2-bit       & 72.93 & 72.14 & $-0.79$ \\
        \midrule
        \multicolumn{5}{c}{\emph{aligned, but generative and underpowered}} \\
        \midrule
        HumanEval (pass@1)     & 2-bit & 43.90 & 35.37 & $-8.53$ ($p{=}0.11$) \\
        GSM8K                  & 2-bit & 75.51 & 78.00 & $+2.49$ ($p{=}0.31$) \\
        \midrule
        \multicolumn{5}{c}{\emph{not aligned: they score 0-shot, we score 5-shot}} \\
        \midrule
        MMLU                   & unquantized & 78.77 & 81.12 & $+2.35$ \\
                               & 2-bit       & 70.87 & 77.75 & $+6.88$ \\
        \bottomrule
    \end{tabular}
\end{table}

\paragraph{Why our numbers for their checkpoint are not their numbers.}
The released checkpoint scores $70.87$ MMLU and $75.51$ GSM8K under its own
harness and $77.75$ / $78.00$ under ours.  Two protocol axes account for that, and
only one of them is a disagreement.

On MMLU the difference is real and it is the few-shot count.  We score 5-shot;
\bitsmoe passes no few-shot count --- its released evaluation configs carry no such
field and its launcher builds no such flag, so the harness default of zero applies
--- and its text lists MMLU among the zero-shot tasks.  The size of that axis shows
up on the arms where no quantization is involved at all: our BF16 run over the
full $14{,}042$ questions scores $81.12$ against their FP16 $78.77$.  A second,
smaller axis is ours: scoring $25$ questions per subject turns a size-weighted
benchmark into a subject-macro one, since MMLU's subjects differ in size by more
than $20\times$.  Re-weighting our per-subject accuracies by the official subject
sizes moves BF16 from $83.30$ to $81.22$ and the \bitsmoe arm from $77.75$ to
$76.85$; the BF16 estimate lands $0.10$ from our own full-set run, which is
consistent, though the re-weighted estimator's own standard error is about $1.5$
points, so the agreement should be read as a sanity check and not as a precision
claim.  On that common footing the quantization gap is $4.4$ points where theirs
is $7.9$.

On GSM8K there is nothing to explain.  We score the first $400$ questions and they
score all $1{,}319$, and a two-proportion test on that pair gives $p=0.31$: the two
numbers are not distinguishable before any protocol difference is even considered.
We do not correct our figure onto the full set.  This project does
hold same-arm runs at both sizes, but they were taken with different generation
caps that the result files do not record, so any correction estimated from them
would confound the subsample with the cap --- which is the kind of comparison this
paper exists to argue against.

Two further caveats.  Our window is $2{,}048$
tokens, and four of the $57$ MMLU subjects have 5-shot contexts longer than that,
so those subjects are left-truncated on our side and are not on theirs; moving to
their zero-shot setting would remove the truncation along with the few-shot gap.
And nothing makes the two columns strictly comparable in any case: the hardware and
dtype differ (BF16 on Ascend against FP16 on A100), and their number comes from
their own packed kernel while ours comes from a dequantized dense copy of the same
weights.  Their harness version is itself unsettled --- the paper names one release
while the code released with it pins a different, later one --- so we do not treat
it as an axis we could align to.  Table 1 in the main paper scores every arm under one
protocol; Table~\ref{tab:crossprotocol} reports the shape this literature uses, and
it should not be read across the rule.

\paragraph{The rate comparison.}
The $2.2704~\bpw$ of our uniform p2 tier and their $2.5425$ share a denominator, exclude the same modules, and
both intend to count payload plus every side channel, so they are the same
convention.  Two things are nonetheless charged unevenly, and both favour us.
Their figure is measured on-disk bytes, and those bytes include a redundant copy of
the rank indices that is bit-identical to the original ($0.0413~\bpw$) together
with container metadata ($0.0156$); ours is the logical rate the format declares,
with no packer written and therefore no container overhead of any kind audited into
it.  Refunding those two fields, the reading most favourable to them, puts their
expert segment at $2.4855~\bpw$.  They also exempt three super-experts from
quantization ($0.0078~\bpw$) where we quantize all $6{,}144$.  Under every one of
these readings our uniform p2 tier stores at least $0.21~\bpw$ less than their
checkpoint, so the comparison below is drawn with the cheaper arm on our side.

\paragraph{Format versus allocation.}
A comparison against a foreign baseline merges two terms, the weight format and the
allocation.  The format term is large: with no allocation at all, the uniform p2 tier
of our ladder beats the released \bitsmoe checkpoint by $2.25$ MMLU, $6.43$ MMLU-Pro
and $10.00$ GSM8K points and $31.4\%$ in perplexity (Table~\ref{tab:storeread}).
Because that arm carries no allocation information, none of this margin is due to bit
allocation.  The whole gain of a read table over uniform p2 is several times
smaller: at $2.6~\bpw$ it moves MMLU by $+0.56$ ($p=0.48$), MMLU-Pro by $+1.21$
($p=0.18$) and GSM8K by $+1.50$, and it reads $0.33~\bpw$ more to do so, so even
this bundles allocation with extra rate.  The rate-matched, same-format controls of
Section 4.2 in the main paper separate the two and are the measure of the allocation.

\paragraph{Scope of the comparison.}
We have not re-run its quantization pipeline, which is not released; the comparison
uses the released checkpoint as published.  The allocation unit, the distortion
surrogate and the calibration set all differ from ours, so the accuracy gap in
Table 1 in the main paper is not evidence about any one of those choices in isolation.
And the read-rate figure above is an accounting quantity on both sides: our own
artifacts are dequantized BF16 tensors with no packed format, so the
$2.95\times$ store-to-read ratio is a property of the two rate models, not a
measured bandwidth.

\section{Measurements on Earlier Instances of the Stream}
\label{app:earlier}

The measurements here were made on two earlier instances of the same
five-precision design, before the instance of Section 3 in the main paper
existed.  Their structural conclusions bear on Sections 3.1 and 3.2 in the main paper; their
magnitudes are not interchangeable with Section 4 in the main paper and are
never mixed with it.  The two-budget, multi-workload and online measurements run on an additive
instance whose audited ladder is $2.5204$, $3.5335$, $4.5322$, $5.5309$,
$6.5295~\bpw$; the distortion ablation runs on the immediate predecessor of the
main instance, whose ladder runs from $2.5204$ to $6.5204~\bpw$ in unit steps.  Both use
activation-weighted distortion with prefill-granularity routing counts from one
calibration corpus, so distortion and rate are weighted by the same frequency.

\subsection{Holding one functional and reading the other}
\label{app:earlier-two}

\begin{table}[h]
\centering
\caption{Constraining one rate and reading the other, on the additive instance.
Store-only overspends the read target on all nine cells; read-only never exceeds
the stored target.}
\label{tab:app-twobudget}
\small
\begin{tabular}{llrrrr}
\toprule
Target & Domain & Store-only: read & overspend & Read-only: store & shortfall \\
\midrule
3.60 & WikiText-2 & 4.718 & $+31.1\%$ & 3.572 & $-0.028$ \\
     & GSM8K      & 4.568 & $+26.9\%$ & 3.522 & $-0.078$ \\
     & code       & 4.411 & $+22.5\%$ & 3.560 & $-0.040$ \\
3.20 & WikiText-2 & 4.092 & $+27.9\%$ & 3.176 & $-0.024$ \\
     & GSM8K      & 3.972 & $+24.1\%$ & 3.151 & $-0.049$ \\
     & code       & 3.874 & $+21.1\%$ & 3.156 & $-0.044$ \\
2.90 & WikiText-2 & 3.5275 & $+21.6\%$ & 2.8865 & $-0.014$ \\
     & GSM8K      & 3.456 & $+19.2\%$ & 2.861 & $-0.039$ \\
     & code       & 3.398 & $+17.2\%$ & 2.845 & $-0.055$ \\
\bottomrule
\end{tabular}
\end{table}

The mechanism behind Table~\ref{tab:app-twobudget} is the shadow price of
Appendix~\ref{app:cancellation}, the storage channel of Proposition 2 in the main paper.  Constrained on storage alone at $3.60~\bpw$ the
assigned precision has rank correlation $0.70$--$0.80$ with routing frequency and
$0.33$--$0.58$ with sensitivity $s_e=d_e(\mathrm{p2})-d_e(\mathrm{p6})$; adding
the frequency-weighted read term reverses this to $0.01$--$0.13$ and
$0.71$--$0.87$.  Over the routed experts ($5{,}735$--$6{,}073$ per domain)
frequency and sensitivity themselves have rank correlation at most $0.06$ in
absolute value, with sensitivity spanning $3.9$--$6.0$ orders of magnitude
against $4.6$--$5.0$ for frequency --- which is why an allocator that follows
routing frequency through the storage term is following a nearly orthogonal signal.  The
asymmetry between the two columns is that under same-domain calibration the read
constraint subsumes the storage constraint; the opposite ordering was seen once
on this project when the reference frequencies came from a different domain than
the calibration, which is a domain mismatch rather than a property of a single
constraint.

\subsection{One artifact for several workloads}
\label{app:earlier-union}

When one artifact serves $D$ workloads, expert $e$ must be stored at
$\max_d M_{d,e}$ while workload $d$ reads $M_{d,e}$; the storage constraint
couples workloads and the read constraints do not.  The per-expert decision
becomes a tuple, solved exactly by enumerating $|\mathcal M|^{D}=125$
combinations at $D=3$.  At a $3.60~\bpw$ read budget the union of three
per-workload solutions stores $3.749~\bpw$ against $6.530$ for retaining every
precision, and constraining the union jointly removes a further $0.188~\bpw$
($5.0\%$) at an unchanged read rate on every workload and a perplexity change of
$+0.003$; pushing to $90\%$ of the union moves the read rates off their targets,
so the allowance that is free on this data ends near $5\%$.  Truncating the stored ceiling instead --- the deployment
heuristic this replaces --- costs $8.5796\to8.6002\to8.7239$ perplexity as the
ceiling drops from p5 to p4 to p3.  The two-stage design of
Section 3.2 in the main paper is the limit of this construction in which one
stored table is fixed and each workload's read table is capped by it.

\subsection{Online recalibration across domains}
\label{app:earlier-online}

\begin{figure}[h]
  \centering
  \includegraphics[width=\linewidth]{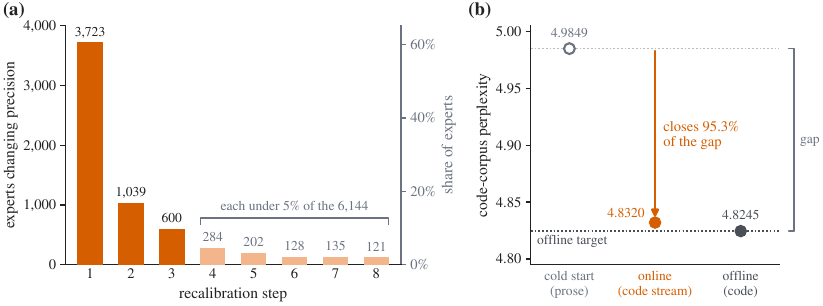}
  \caption{Online recalibration on a $150$-document code corpus from a
  prose-calibrated allocation at $3.2~\bpw$ (earlier additive instance).
  \textbf{(a)}~Experts changing precision at each step; after the third, fewer than
  $5\%$ move.  \textbf{(b)}~Code-corpus perplexity: updating from the stream's own
  unlabelled inputs closes $95.3\%$ of the gap to an allocation calibrated offline on
  code.}
  \label{fig:online-convergence}
\end{figure}

Figure~\ref{fig:online-convergence} runs the online recalibration of
Section 3.2 in the main paper on a 150-document code corpus, starting from prose
statistics; it closes $95.3\%$ of the distance to an allocation calibrated offline
on code ($4.9849\to4.8320$ against $4.8245$ perplexity), with
$3{,}723$, $1{,}039$ and $600$ of $6{,}144$ experts changing precision at the
first three steps and under $5\%$ thereafter.  The size of the first step is the
size of the domain shift: the per-expert distortion weights of the two domains
have Spearman correlation $-0.38$.

\subsection{The distortion functional}
\label{app:earlier-flip}

\begin{table}[h]
\centering
\caption{The distortion functional at four budgets on the predecessor
instance, MMLU accuracy ($n=1{,}425$).  The control is a rate-matched random
assignment between two adjacent tiers.}
\label{tab:app-flip}
\small
\begin{tabular}{llrrr}
\toprule
Budget & Distortion functional & $R_{\mathrm{store}}$ & $R_{\mathrm{read}}$ & MMLU \\
\midrule
2.600 & none (rate-matched control) & 2.600 & 2.595 & 81.26 \\
      & activation-weighted squared error & 2.600 & 2.598 & 80.98 \\
      & decision-aligned            & 2.600 & 2.600 & 81.19 \\
\addlinespace
2.900 & none                        & 2.900 & 2.881 & 81.54 \\
      & squared error               & 2.900 & 2.912 & 80.42 \\
      & decision-aligned            & 2.845 & 2.900 & 81.47 \\
\addlinespace
3.200 & none                        & 3.200 & 3.189 & 82.32 \\
      & squared error               & 3.191 & 3.220 & 81.33 \\
      & decision-aligned            & 3.098 & 3.200 & 82.25 \\
\addlinespace
3.600 & none                        & 3.600 & 3.595 & 82.95 \\
      & squared error               & 3.569 & 3.609 & 82.60 \\
      & decision-aligned            & 3.439 & 3.600 & 82.67 \\
\bottomrule
\end{tabular}
\end{table}

Table~\ref{tab:app-flip} varies the output weight $A$ and nothing else.  The
squared-error criterion sits below the uninformed control at all four budgets;
the decision-aligned criterion runs within $0.07$ points of the control at three
budgets while storing $0.06$--$0.16~\bpw$ less at the three upper budgets.  At
$3.200~\bpw$, where per-question logs support a paired test, it is $0.07$ points
from the control ($p=1.00$) and $0.91$ above the squared-error arm ($p=0.16$).
On the generative benchmarks the decision-aligned arm gains $+0.50$, $+2.25$ and
$\pm0.00$ GSM8K points (flexible-extract) and $+3.40$, $+2.60$ and $-1.20$ MATH-500
points (Math-Verify) over the control at $2.9$, $3.2$ and $3.6~\bpw$.  No single cell
is significant; pooled over both benchmarks, the two lower budgets give $+2.28$ points
($95\%$ CI $[+0.50,+4.11]$, $p=0.013$) and the $3.6~\bpw$ budget gives $-0.67$
($p=0.56$).  This ablation predates the controls of Table~\ref{tab:where}, so its positive cells
are indicative rather than a measurement of what the allocation is worth.

\section{Evidence from an Earlier Tier Family, and Negative Results}
\label{app:hif}

\subsection{A tier family that was not on the convex hull}

Before the prefix-nested stream existed, this project allocated over a family of
fixed-precision formats occupying $2.6$, $3.6$, $4.6$ and $8.0~\bpw$ and stored
separately, so a mixed allocation needed one copy per tier and the two rates were
held in a single Lagrangian.  The magnitudes measured there do not transfer to
the stream of Section 3 in the main paper, whose tiers are nested prefixes of
one code where these were separately stored formats, but three controls run there have
not been repeated, and their signs bear on
Section 3.1 in the main paper.  All rates in this subsection are realized read rates under
the routing of the scored benchmark; a larger MMLU sample ($70$ questions per
subject, BF16 $83.18$) is used where noted.

\paragraph{A frequency heuristic against a sensitivity allocator.}
The heuristic in use before the allocator assigned the higher tier to the $30\%$
most frequently routed experts.  At a realized read rate of $3.68~\bpw$ it scored
$78.87$ on the larger MMLU sample against $83.01$ for the sensitivity allocator
at $3.60$ and $82.03$ for a uniform $3.6~\bpw$ format; on MMLU-Pro $50.43$
against $56.36$ and $55.29$; on WikiText-2 $11.06$ against $8.50$ and $9.01$.
Assigning by routing frequency was worse than not allocating at all, which is what the shadow
price of Appendix~\ref{app:cancellation} predicts when frequency and sensitivity
are uncorrelated.  On GSM8K at a realized $3.6~\bpw$ all three tied ($89.75$,
$88.75$, $89.00$), the first observation on this project that allocation has no
lever where the rate--accuracy curve is flat.

\paragraph{A rate-structure-only control.}
To separate what the distortion table contributes from what the rate constraint
alone imposes, a control was built that reproduces the allocator's per-layer tier
histogram exactly and its read rate to within $0.03\%$, but assigns tiers by
frequency rank alone with no distortion information.  Against it the sensitivity
allocator gained $6.10\%$ WikiText-2 perplexity at $3.2~\bpw$ ($62$ of $62$
documents, $p=4\times10^{-19}$) and $2.76\%$ at $3.6$ ($59$ of $62$,
$p=2\times10^{-14}$), and $2.21$ ($p=0.019$) and $1.71$ ($p=0.042$) MMLU-Pro
points.  The control itself scored below the uniform $3.6~\bpw$ format on
MMLU-Pro while beating it on both WikiText-2 and code perplexity, so mixed precision without a
distortion model is not a free improvement even at a fixed histogram.

\paragraph{The distortion functional at one realized rate.}
Five allocations held to a realized MMLU read rate of $3.000~\bpw$ (Figure~\ref{fig:functional}), differing
only in the distortion functional, gave WikiText-2 perplexities of $8.99$
(activation-weighted squared error), $9.19$ (cross-entropy-gradient weighted),
$9.28$ (decision-aligned), $9.46$ (unweighted squared error) and $10.57$ (final
position only), strictly ordered by how narrowly the functional concentrates on
the decision; and MMLU-Pro accuracies of $53.71$, $55.07$, $55.71$, $55.00$ and
$55.00$, nearly the mirror image, with the decision-aligned arm the only one
significant against the squared-error arm ($+2.00$, McNemar $p=0.030$;
$p=0.12$ after correcting for four arms).  When the decision-aligned weight was
calibrated on the served domain it scored $86.00$ GSM8K under flexible extraction
($85.50$ strict) against $81.00$ for squared error ($p=0.015$); taken from a
different domain it did not ($79.50$ strict against $81.25$, $p=0.50$), which is
the observation that made same-domain calibration of $A$ a requirement.

\begin{figure}[h]
\centering
\includegraphics[width=\linewidth]{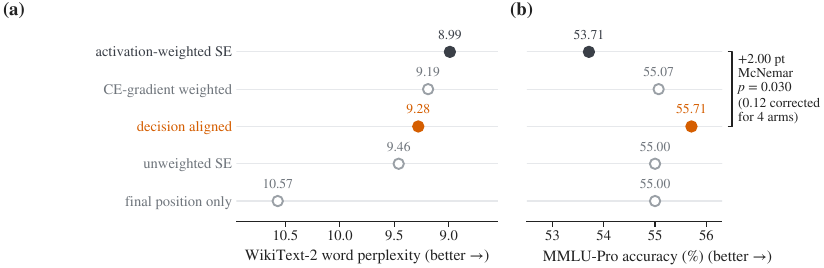}
\caption{Five distortion functionals at a realized MMLU read rate of $3.000~\bpw$
(earlier tier family).  \textbf{(a)}~WikiText-2 perplexity; \textbf{(b)}~MMLU-Pro
accuracy, nearly its mirror image.  The decision-aligned functional used by \duorate
is the only arm above activation-weighted squared error on MMLU-Pro at a nominal
$p<0.05$.}
\label{fig:functional}
\end{figure}

\paragraph{The two reference formats.}
Two members of that family are the fixed-precision references this project carried
before the uniform tiers of the prefix stream existed.  The $4.6~\bpw$ member is the
HIF4 row of Table 1 in the main paper (listed there at the $4.5~\bpw$ of the published format).  The $2.6~\bpw$ member scores $47.37$ WikiText-2
perplexity, $37.21$ MMLU-Pro and $68.25$ GSM8K under our harness; its collapse is a
property of a format without a refinement ladder and is not a control for
allocation, which is why our same-format controls are the uniform tiers of the
prefix stream instead.

\subsection{A published allocator's objective on the current stream}
\label{app:controls}

\begin{figure}[h]
\centering
\includegraphics[width=\linewidth]{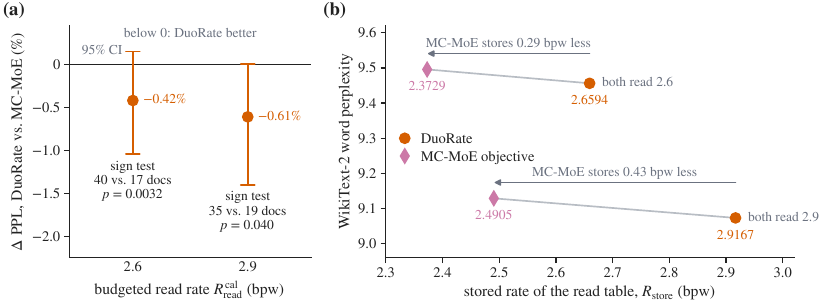}
\caption{The MC-MoE objective inside our bisection, with the same distortion table and
cap (WikiText-2).  \textbf{(a)}~At matched read rate, \duorate's perplexity relative to
MC-MoE's: the sign test detects the difference, the $95\%$ interval includes zero.
\textbf{(b)}~The same tables at their stored rates: MC-MoE stores $0.29$ and
$0.43~\bpw$ less.  Neither dominates.}
\label{fig:mcmoe}
\end{figure}

MC-MoE~\citep{mcmoe} weights the distortion by routing frequency and leaves the bit
term unweighted.  Substituting that objective into the same bisection, with the same
distortion table and the same stage-one cap, loses to ours on WikiText-2 at both
read budgets by margins the sign test sees and the interval does not
(Figure~\ref{fig:mcmoe}): $0.42\%$ at
$2.6~\bpw$ ($40$ documents against $17$, $p=0.0032$, interval $[-1.04,+0.15]$) and
$0.61\%$ at $2.9$ ($35$ against $19$, $p=0.040$, interval $[-1.40,+0.01]$).
Weighting the distortion by frequency drives the experts a workload rarely reaches
to the floor; deepening them costs nothing in the read functional and a full tier in
the unweighted one, so those tables store $2.3729$ and $2.4905~\bpw$ against our
$2.6594$ and $2.9167$ at the same read rates, cheaper by $0.29$ and $0.43~\bpw$.
Matched on the read rate, ours scores better and costs more to hold, so neither
dominates, and which one looks better is decided by which of the two functionals a
deployment holds fixed.  That is this paper's claim, demonstrated with a
competitor's objective rather than our own.

This does not contradict Proposition 1 in the main paper.  The proposition
concerns the \emph{surrogate} decode objective, in which dropping $f$ is optimal;
weighting the distortion alone is the shadow-price case of
Appendix~\ref{app:cancellation}, where $f$ survives and the assignment follows
routing frequency.  The exponents $\alpha,\beta,\gamma$ of MC-MoE's objective are
neither given numerically nor ablated in its own paper; we set the gate-weight
exponent to zero, because no routing dump exists for the prose domain, and the
other two to one.

\subsection{Negative results}
\label{app:negative}

\paragraph{Per-projection units.}
Splitting the expert into gate, up and down units improved the proxy objective by
$17\%$ and MMLU by $0.15$ points on the earlier tier family, inside the noise
floor.  It has not been re-measured on the current stream.

\paragraph{Proxy differences.}
Seven times on this project a difference in the proxy objective --- $17\%$, a
tier-histogram shift, $0.1~\bpw$ of ledger --- has failed to appear end to end.
No proxy figure is presented as a result in this paper.

\paragraph{Cross-layer budget.}
Earlier analyses on this project attributed most of the allocation gain to the
distribution of bits across layers rather than within them.  A preliminary probe
at matched stored rate --- two tiers, layer granularity, choosing which $24$ of
$48$ layers receive the higher tier --- scored $76.3$ MMLU when the layers were
chosen by our layer-level distortion signal, $78.1$ when chosen at random and
$79.6$ when the shallowest $24$ were chosen.  The runs are single and unpaired,
they use the earlier tier family, and the working point differs from the tables
above, so no conclusion is drawn in either direction and no claim about the
cross-layer budget is made in this paper.

\end{document}